
\documentclass[journal]{IEEEtran}
\usepackage{amsmath,amssymb,amsfonts}
\usepackage{bm}
\usepackage{cite}
\usepackage{algorithm}
\usepackage{algorithmic}
\usepackage{graphicx}
\usepackage{caption}
\usepackage{lipsum}
\usepackage{enumerate}
\usepackage{amsthm}
\usepackage{amsmath}
\usepackage{amssymb}
\usepackage{color}
\usepackage{booktabs}
\newcommand{\Exp}{\mathop{\mathbb E}\displaylimits}
\newtheorem{definition}{Definition}
\newtheorem{lemma}{Lemma}
\newtheorem{theorem}{Theorem}

\newtheorem{remark}{Remark}

\usepackage{bm}
\usepackage{threeparttable}

\newcommand\VECTOR{} 
\newcommand\SPACE{\mathcal}  

\ifCLASSOPTIONcompsoc
  \usepackage[caption=false,font=normalsize,labelfont=sf,textfont=sf]{subfig}
\else
  \usepackage[caption=false,font=footnotesize]{subfig}
\fi
\ifCLASSINFOpdf
\else
\fi
\hyphenation{op-tical net-works semi-conduc-tor}

\begin{document}

%
\title{Fixed-Dimensional and Permutation Invariant State Representation of Autonomous Driving}
%
%
%

\author{Jingliang Duan, Dongjie Yu, Shengbo Eben Li*, Wenxuan Wang,  Yangang Ren, Ziyu Lin and~Bo Cheng
\thanks{This work is supported by Beijing Science and Technology Plan Project with Z191100007419008. It is also partially supported by Tsinghua-Toyota Joint Research Institute Cross-discipline Program. Jingliang Duan and Dongjie Yu contributed equally to this work. All correspondences should be sent to S. Li with email: lisb04@gmail.com. }
\thanks{J. Duan, D. Yu, S. Li, W. Wang, Y. Ren, Z. Lin, and B. Cheng are with State Key Lab of Automotive Safety and Energy, School of Vehicle and Mobility, Tsinghua University, Beijing, 100084, China. They are also with Center for Intelligent Connected Vehicles and Transportation, Tsinghua University. {\tt\small Email: duanjl15@163.com; (ydj20, wang-wx18, ryg18, linzy17)@mails.tsinghua.edu.cn; (lishbo, chengbo)@tsinghua.edu.cn}.
}
\thanks{J. Duan is also with the Department of Electrical and Computer Engineering, National University of Singapore, Singapore. {\tt\small Email: duanjl@nus.edu.sg}.
}
}

 \maketitle

\begin{abstract}
In this paper, we propose a new state representation method, called encoding sum and concatenation (ESC), to describe the environment observation for decision-making in autonomous driving. Unlike existing state representation methods, ESC is applicable to the situation where the number of surrounding vehicles is variable and eliminates the need for manually pre-designed sorting rules, leading to higher representation ability and generality. The proposed ESC method introduces a feature neural network (NN) to encode the real-valued feature of each surrounding vehicle into an encoding vector, and then adds these vectors up to obtain the representation vector of the set of surrounding vehicles. Then, a fixed-dimensional and permutation-invariance state representation can be obtained by concatenating the set representation with other variables, such as indicators of the ego vehicle and road. By introducing the sum-of-power mapping, this paper has further proved that the injectivity of the ESC state representation can be guaranteed if the output dimension of the feature NN is greater than the number of variables of all surrounding vehicles. This means that the ESC representation can be used to describe the environment and taken as the inputs of learning-based policy functions. Experiments demonstrate that compared with the fixed-permutation representation method, the policy learning accuracy based on ESC representation is improved by 62.2\%.
\end{abstract}

\begin{IEEEkeywords}
Permutation-invariance, state representation, autonomous driving.
\end{IEEEkeywords}

%
\IEEEpeerreviewmaketitle

\section{Introduction}
%
%
%
%
\IEEEPARstart{A}{utonomous} driving has become a research hotspot since it can enhance road safety, ease road congestion, free human drivers, etc. Decision-making is the core component of achieving high-level autonomous driving. Although rule-based methods have been widely used to realize decision-making, manually encoding rules is not always feasible due to the highly dynamic and stochastic nature of driving scenarios \cite{katrakazas2015rule-based,montemerlo2008junior}. The learning-based method is a promising technology to realize high-level autonomous driving by directly learning a parameterized policy that maps state representations to actions from data using supervised learning or reinforcement learning (RL) \cite{sutton2018reinforcement}.  Recent learning-based decision-making researches tend to use multi-layer neural networks (NNs) to represent the policy due to their remarkable fitting and generalization capabilities \cite{lecun2015deep,duan2021distributional,duan2021cadp}. According to the 
state representation methods, the learning-based decision making can be divided into two categories: (1) end-to-end (E2E) decision making, which directly maps the raw sensors outputs to driving decisions, and (2) tensor-to-end (T2E) decision making, which describes states using real-valued representations, such as velocity and position. 

The E2E decision-making method has been widely investigated during the last two decades, because it reduces the need for perception algorithms. In the late 1980s, Pomerleau built the first end-to-end autonomous driving system, called ALVINN, that took images consisting of ${32 \times 32}$ binary values and an ${8 \times 32}$ matrix from a laser range finder as inputs and output steering angles \cite{pomerleau1989alvinn}. After training based on 1200 labeled samples, the NAVLAB vehicle equipped with ALVINN could drive in a 400m road without obstacles at the speed of 1m/s. Similarly, NVIDIA trained a convolutional driving policy NN for autonomous highway driving, which describes states using
images from a single front-facing camera paired with the steering angles  \cite{bojarski2016NVIDIA_e2e_1,bojarski2017NVIDIA_e2e_2}. In addition to supervised learning methods, Lillicrap \emph{et al.} (2016) employed an RL algorithm, called DDPG, to learn a policy NN for lane-keeping on the TORCS simulation platform, which took simulated images as inputs and output acceleration quantity and steering wheel angles \cite{lillicrap2015DDPG}. Besides, many other related works on E2E decision-marking for autonomous driving can be found in \cite{jaritz2018end,lecun2004dave,chen2015deepdriving,wymann2000torcs,kendall2019DDPGdriving,perot2017end,wolf2017learning, liang2018cirl}. Since there is a great difference between the sensor outputs of the simulated environment and the actual vehicle, the learned policy based on simulated perception is difficult to apply to real vehicles, or only applicable to simple driving tasks such as lane-keeping \cite{pomerleau1989alvinn,kendall2019DDPGdriving}. Besides, 
the sensor outputs are also sensitive to the configuration of vehicle sensors, which limits the generalization of E2E decision-making methods in different vehicles.

Compared with E2E decision-making that takes raw sensors information as states, preliminary studies showed that real-valued representations
perform better, due to the reduced state space being easier to learn and the real values making it easier for the system to generalize \cite{isele2018navigating}. Besides, the driving style and intention represented by pre-designed values can also be taken as policy inputs to further improve driving performance \cite{li2020behavior}. Therefore, T2E decision-making has achieved great success in autonomous driving \cite{mirchevska2018high,wang2017formulation,wang2018reinforcement, wang2019continuous,ma2020alternating}. Duan \emph{et al.} (2020) represented driving states using a 26-dimensional vector, consisting of indicators of the ego vehicle, the road and the nearest four vehicles, realizing smooth and safe decision making on a simulated 2-lane highway via RL \cite{duan2020hierarchical}. Guan \emph{et al.} (2020) included a total of 16 variables from the ego vehicle and seven surrounding vehicles (position, speed, etc.) in the state representation to handle the cooperative longitudinal decision-making in a virtual intersection \cite{guan2020centralized}. The information of different vehicles is sorted according to a pre-designed order to form the final state vector.  

In summary, the T2E method needs to concatenate perception information of the ego vehicle, surrounding vehicles and roads into a state vector and then perform policy learning based on the vectorized state space. Although T2E has shown its advantages in terms of policy performance and generalization ability to vehicles with different sensor systems, it suffers from two challenges: (1) dimension sensitive problem and (2) permutation sensitive problem. The former means that T2E can only consider a fixed number of surrounding vehicles since the input dimension of the parameterized policy must be a predetermined value \cite{isele2018navigating, duan2020hierarchical, guan2020centralized}. The latter indicates that the information of surrounding vehicles needs to be 
permuted according to manually designed sorting rules because different permutations lead to different state representations and policy outputs \cite{mirchevska2018high, wang2017formulation, wang2018reinforcement, wang2019continuous}. It is usually difficult to design a proper sorting order for complex driving scenarios such as intersections. These two challenges will not only limit the generality of T2E for different driving scenarios, but also hurt the performance of the learned policy.

In this paper, we propose a new state representation method, called encoding sum and concatenation (ESC), to describe the environment observation for learning-based decision making in autonomous driving. The main contributions and advantages of this paper are as follows:
\begin{enumerate}
    \item The proposed ESC method introduces a feature NN to encode the real-valued feature of each surrounding vehicle into an encoding vector, and then adds these vectors up to obtain the representation vector of the set of surrounding vehicles. A fixed-dimensional and permutation-invariance state representation is obtained by concatenating the set representation with other variables, such as indicators of the ego vehicle and road. Different from the fixed-permutation representation method used in existing T2E studies \cite{mirchevska2018high,wang2017formulation,wang2018reinforcement, wang2019continuous,duan2020hierarchical,guan2020centralized}, ESC  is  applicable  to  the  situation  where  the  number  of surrounding vehicles is variable and eliminates the need for manually pre-designed sorting rules, leading to higher representation ability and generality.
    \item  By introducing the sum-of-power mapping, we have further proved that the injectivity of the ESC state representation can be guaranteed if the output dimension of  the feature NN is greater than  the number of variables of all surrounding vehicles. This means that the ESC representation can be used to injectively describe the environment. Besides, we further show that, by taking the ESC representation as policy inputs, we can find the nearly optimal feature NN and policy NN by simultaneously optimizing them using gradient-based updating.
    \item Function approximation experiments on six policy learning benchmarks demonstrate that, compared with the fixed-permutation representation method used in \cite{mirchevska2018high,wang2017formulation,wang2018reinforcement, wang2019continuous,duan2020hierarchical,guan2020centralized}, the  policy learning error based on the ESC representation is reduced by 62.2\%.
\end{enumerate}

In Section \ref{sec:formulation}, we describe the state representation problem, and analyze the effect of  dimension sensitivity and permutation sensitivity on policy learning. Section \ref{sec:ESC} proposes the ESC state representation method. In Section \ref{sec:exp}, we present experimental results that show the efficacy of ESC. Section \ref{sec:conclusion} concludes this paper.

{\bf{Notation:}} $\mathbb{R}^d$ denotes the set of $d$-dimensional real-valued vectors. $\mathbb{N}$ denotes the set of natural numbers. $\mathcal{O}$ denotes the set of all observed information. $\mathcal{X}$ denotes the set of surrounding vehicles. $x$ denotes the real-valued feature vector of each surrounding vehicle. $s$ denotes the state vector. $M$ denotes the number of surrounding vehicles within the perception range.

\section{Problem Description}
\label{sec:formulation}

In this section, we first describe the state representation problem. Then, we analyze the effect of dimension sensitive and permutation sensitive issues on the performance, generality, and sample complexity of policy learning.

\subsection{Observation and State}
We denote the observation set of driving scenarios as $\mathcal{O}\in \overline{\mathcal{O}}$, which consists of: (a) the information set of surrounding vehicles $\mathcal{X}=\{x_1,x_2,\cdots,x_M\}$, where $x_{i} \in \mathbb{R}^{d_1}$ is the real-valued feature vector of the $i$th surrounding vehicle, and (b) the feature vector containing other information related to the driving task $x_{\rm else} \in \mathbb{R}^{d_2}$, such as indicators of the ego vehicle and road geometry. Thus, $\mathcal{O}=\{\mathcal{X},x_{\rm else}\}$. The set size $M$ of $\mathcal{X}$, i.e., the number of surrounding vehicles within the perception range of the ego car, is constantly changing due to the dynamic nature of the traffic. Assuming that the range of the number of surrounding vehicles is $[1,N]\cap\mathbb{N}$, the space of $\mathcal{X}$ can be denoted as $\overline{\mathcal{X}}=\{\mathcal{X}|\mathcal{X}=\{x_1,\cdots,x_{M}\},x_i\in\mathbb{R}^{d_1},i\le M,M\in[1,N]\cap\mathbb{N}\}$, i.e., $\mathcal{X}\in\overline{\mathcal{X}}$. Noted that the subscript $i$ of $x_{i}$ in $\mathcal{X}$ represents the ID of a certain surrounding vehicle. For example,  $\mathcal{X}=\{x_M,x_{M-1},\cdots,x_1\}$ indicates that all vehicles are arranged in descending order according to the ID of each surrounding vehicle. Different permutations of these vehicles do not have an essential distinction since they represent the same traffic situation. But when they are input into some policy functions, the order matters.

We denote the mapping from the observation set $\SPACE{O}$ to state representation $\VECTOR{s}$ as $U(\SPACE{O})$, i.e.,
\begin{equation}
\label{eq.mapping_s}
s=U(\SPACE{O})=U(\SPACE{X},x_{\rm else}).
\end{equation}
Current T2E researches usually concatenate the variables in $\mathcal{O}$ to obtain the state representation vector $s$. According to the permutation of surrounding vehicles $x_i$ in $s$, there are two commonly used approaches: (1) all-permutation (AP) representation and (2) fixed-permutation (FP) representation. The AP method aims to consider all possible permutations of surrounding vehicles in $s$,
\begin{equation}
\label{eq.all_mapping}
\VECTOR{s} = U_{\rm AP}(\SPACE{O})= [x_{\varsigma(1)}^\top,\cdots, x_{\varsigma(M)}^\top,x_{\rm else}^\top]^\top,
\end{equation}
where $U_{\rm AP}(\SPACE{O}): \overline{\mathcal{O}}\rightarrow \mathbb{R}^{Md_1+d_2}$ denotes the AP mapping and $\varsigma$ represents any possible permutation. For example, this representation method will take $s_1=[x_1,\cdots,x_M]$ and $s_2 = [x_M,\cdots,x_1]$ as two different states although they represents the same traffic situation.
Unlike the AP method, the FP method only considers one permutation, which arranges the objects in $\mathcal{X}$ according to a pre-designed sorting rule $o$, i.e.,
\begin{equation}
\label{eq.order_mapping}
s= U_{\rm FP}(\SPACE{O})=[x_{o(1)}^{\top},\cdots,x_{o(M)}^{\top},x_{\rm else}^{\top}]^{\top},
\end{equation}
where $U_{\rm FP}(\SPACE{O}): \overline{\mathcal{O}}\rightarrow \mathbb{R}^{Md_1+d_2}$ denotes the FP mapping.

According to \eqref{eq.all_mapping} and \eqref{eq.order_mapping}, the change of vehicle number $M$ or the permutation of surrounding vehicles may lead to different state vectors $s$, bringing two challenges: (1) dimension sensitivity and (2) permutation sensitivity.  To find a better state representation method, it is necessary first to analyze the impact of these two issues on policy learning.

\subsection{Dimension Sensitivity}
The state dimension of AP and FP methods is ${\text{dim}}(U_{\rm AP}(\SPACE{O}))={\text{dim}}(U_{\rm FP}(\SPACE{O}))=Md_1+d_2$, which is proportional to the number of surrounding vehicles $M$. Since $M\in[1,N]\cap\mathbb{N}$ is constantly changing during driving, ${\text{dim}}(s)$ is not a fixed value. However, the input dimension of the parameterized policy must be a predetermined fixed value due to the structure of the approximate functions, such as neural network (NN) and polynomial functions. This means that T2E methods based on AP or FP representation are only valid when the number of surrounding vehicles is fixed \cite{mirchevska2018high, wang2017formulation, wang2018reinforcement, wang2019continuous}. Assuming that only $Z$ surrounding vehicles are considered, as shown in Figure \ref{fig.vehnumber_problem},  when $M > Z$, we need to select $Z$ vehicles from $\mathcal{X}$ based on pre-designed rules. When $M<Z$, we need to add $Z-M$ virtual vehicles far away from the ego to meet the input requirement of the policy function without affecting decision-making. The former will 
lead to information loss, while the latter will bring information redundancy. Therefore, it is crucial to select an appropriate value of $Z$ according to the requirements of different driving tasks, which also limits the generality of AP and FP methods.
\begin{figure}[!htb]
\centering
\captionsetup{singlelinecheck = false,labelsep=period, font=small}
\captionsetup[subfigure]{justification=centering}
        \subfloat[]{\label{fig:less_vehicle}\includegraphics[width=0.162\textwidth]{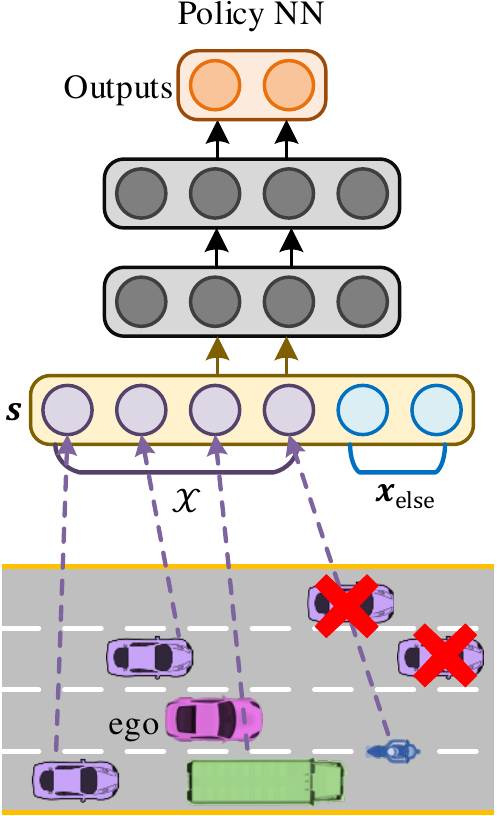}}
        \hspace{.06in}
        \subfloat[]{\label{fig:more_vehicle}\includegraphics[width=0.2\textwidth]{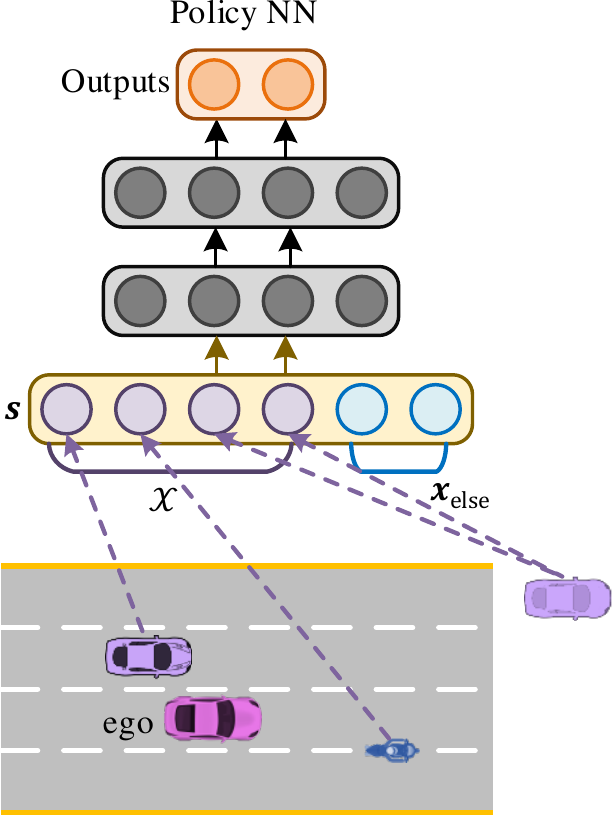}} 
        \caption{Dimension sensitivity. (a) $M>Z$. (b) $M<Z$. When $M>Z$, some surrounding vehicles cannot be input into the policy function due to dimensional limitations; while when  $M<Z$, we need to supplement the policy inputs with the information of virtual vehicles.}
    \label{fig.vehnumber_problem}
\end{figure}

\subsection{Permutation Sensitivity}

As illustrated in Figure \ref{fig.vehpermutation_problem}, assuming the number of surrounding vehicles $M$ is fixed, different permutations of $x_i$ correspond to different state vector $s$, thereby leading to different policy outputs. In other words, $s$ and policy outputs are permutation sensitive to the order of surrounding vehicles. However, a reasonable driving decision should be permutation invariant to the order of objects in $\mathcal{X}$ because all possible permutations correspond to the same driving scenario. To analyze the effect of permutation sensitivity, we first define the permutation invariant function as follows.
\begin{figure}[!htb]
\captionsetup{singlelinecheck = false,labelsep=period, font=small}
    \centering{\includegraphics[width=0.35\textwidth]{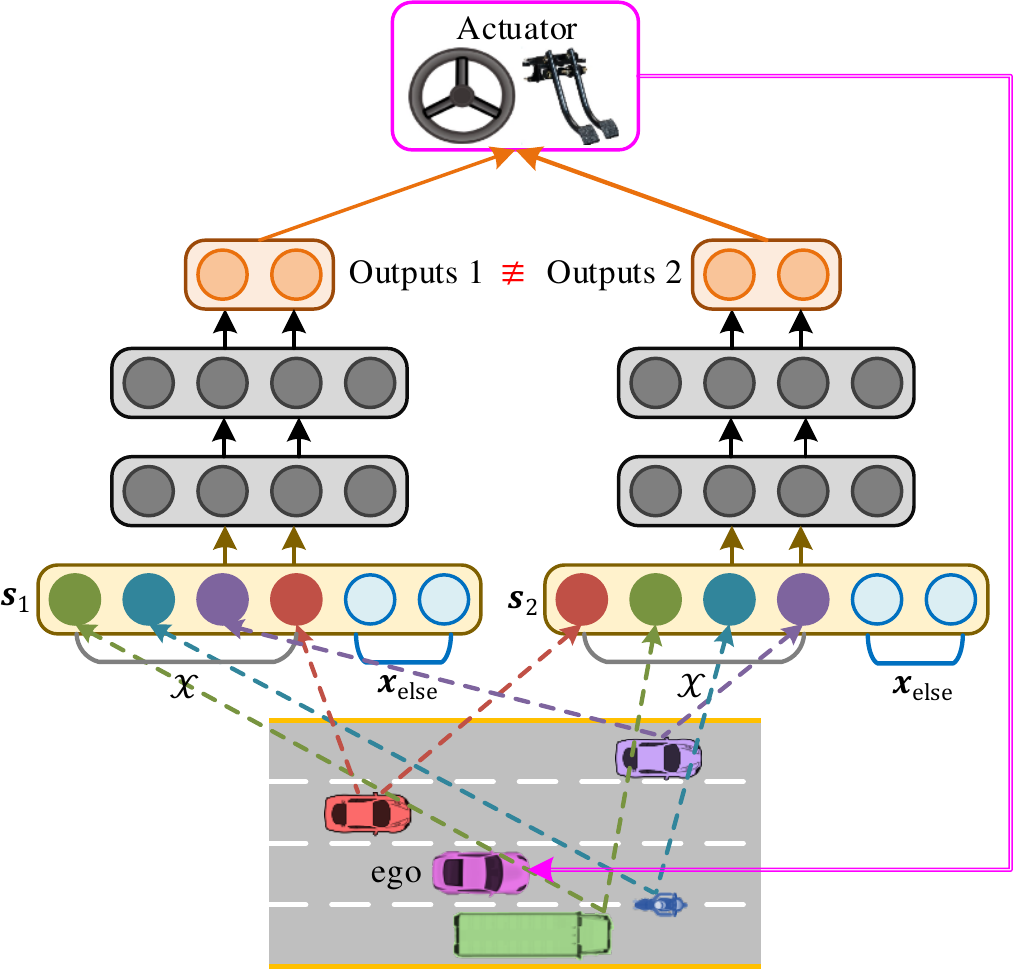}}
    \caption{Permutation sensitivity. For the same driving scenario, different permutations bring different policy inputs, which may result in different policy outputs.}
    \label{fig.vehpermutation_problem}
\end{figure}

\begin{definition}\label{defi.pi_fcn}
(Permutation Invariant Function). Function $F:\overline{\mathcal{X}} \times \mathbb{R}^{d_2}\rightarrow\mathcal{Y}$ is permutation invariant to the order of objects in the set $\mathcal{X}$ if $F(\{x_1,\cdots,x_M\}, x_{\rm else})\equiv F(\{x_{\varsigma(1)},\cdots,x_{\varsigma(M)}\}, x_{\rm else})$ for any permutation $\varsigma$.
\end{definition}
For example, $F(\{x_1,\cdots,x_M\},x_{\rm else})=\big\|\sum_{x\in \SPACE{X}}x\big\|_2+\big\|x_{\rm else}\big\|_2$ is a permutation invariant function w.r.t. $\SPACE{X}$. Similarly, we define the permutation sensitive function as
\begin{definition}\label{defi.ps_fcn}
(Permutation Sensitive Function). Function $F:\overline{\mathcal{X}} \times \mathbb{R}^{d_2}\rightarrow\mathcal{Y}$ is permutation sensitive to the order of objects in the set $\mathcal{X}$ if $\exists \varsigma$ such that $F(\{x_1,\cdots,x_M\}, x_{\rm else})\not\equiv F(\{x_{\varsigma(1)},\cdots,x_{\varsigma(M)}\}, x_{\rm else})$.
\end{definition}

We denote the expected driving policy as $F_{\rm PI}(\SPACE{X}, x_{\rm else}): \overline{\mathcal{X}} \times \mathbb{R}^{d_2}\rightarrow\mathcal{Y}$, which is permutation invariant w.r.t. $\mathcal{X}$. The objective of T2E decision-making methods is to learn a parameterized policy $\pi$, which takes $U$ as inputs, such that
\begin{equation}
\label{eq.PS_appro_PI}
\begin{aligned}
\pi(
U(\{x_{\varsigma(1)},\cdots, x_{\varsigma(M)}&\},x_{\rm else});\psi^*)\approx F_{\rm PI}(\SPACE{X},x_{\rm else}), \\
& \forall \varsigma,\forall\SPACE{X}\in \overline{\SPACE{X}},\forall x_{\rm else}\in\mathbb{R}^{d_2},
\end{aligned}
\end{equation}
where $\psi$ is the policy parameters and $^*$ indicates that the parameters are optimal. An effective mapping $U$ will significantly reduce the sample complexity and error of policy learning.

For the AP representation method in \eqref{eq.all_mapping}, the policy is learned by minimizing the following loss
\begin{equation}
\label{eq.all_permutaion}
\min_{\VECTOR{\psi}} \Exp_{\substack{\SPACE{X}\in\overline{\SPACE{X}},x_{\rm else}\in\mathbb{R}^{d_2},\forall \varsigma}}\big(\pi(U_{\rm AP}(\SPACE{O});\VECTOR{\psi})-F_{\rm PI}(\SPACE{X},x_{\rm else})\big)^2.
\end{equation}
The challenge faced by this method is that there are $M!$ permutations for a particular set $\mathcal{X}$ containing $M$ surrounding vehicles. This indicates that one driving scenario will correspond to $M!$ different state representations, which greatly increases the sample complexity.

For the FA representation method in \eqref{eq.order_mapping}, the policy can be found by minimizing
\begin{equation}
\label{eq.order_permutation}
\min_{\VECTOR{\psi}} \Exp_{\substack{\SPACE{X}\in\overline{\SPACE{X}},x_{\rm else}\in\mathbb{R}^{d_2}}}\big(\pi(U_{\rm FP}(\SPACE{O});\VECTOR{\psi})-F_{\rm PI}(\SPACE{X},x_{\rm else})\big)^2.
\end{equation}
The pre-designed order $o$ of FA guarantees the permutation invariance of the policy $\pi(U_{\rm FP}(\SPACE{O});\VECTOR{\psi})$ w.r.t. $\mathcal{X}$, reducing the sample complexity compared with AP methods. However, it may break the continuity of the policy function w.r.t. each element in $\mathcal{X}$, i.e.,
\begin{equation}
\begin{aligned}
\lim_{\VECTOR{x}_1'\rightarrow\VECTOR{x}_1,\cdots,\VECTOR{x}_M'\rightarrow\VECTOR{x}_M}&\pi(\VECTOR{x}_{{o}(1)}',\cdots, \VECTOR{x}_{{o}(M)}',\VECTOR{x}_{\rm else};\VECTOR{\psi})\\
&\not\equiv \pi(\VECTOR{x}_{{o}(1)},\cdots, \VECTOR{x}_{{o}(M)},\VECTOR{x}_{\rm else};\VECTOR{\psi}),\ \forall\VECTOR{\psi}.
\end{aligned}
\end{equation}
Since the position of each surrounding vehicle is dynamically changing during driving, the position of $x_i$ in $U_{\rm FP}(\SPACE{O})$ may change at a certain time, resulting in a sudden change in the state $s$ and policy output $\pi(U_{\rm FP}(\SPACE{O});\VECTOR{\psi})$. For example, the rear vehicle at the current moment may become the preceding vehicle at a certain moment in the future by overtaking the ego vehicle. In particular, we will give a special case below for further explanation. Let $x_{\rm else}\in\emptyset$, and $\SPACE{X}=\{[j,2]^\top,[1,5]^\top\}$, where $j$ is a variable. The rule $o$ sorts $\mathcal{X}$ in increasing order according to the first element of $x_i$. It follows that when $j\le 1$, $\VECTOR{x}_{{o}(1)}=[j,2]^\top$ and $\VECTOR{x}_{{o}(2)}=[1,5]^\top$; when $j> 1$, $\VECTOR{x}_{{o}(1)}=[1,5]^\top$ and $\VECTOR{x}_{{o}(2)}=[j,2]^\top$. It can be seen that the permutation of objects in $\mathcal{X}$ has changed around $j=1$, which may cause a sudden change in policy outputs, i.e.,
\begin{equation}
\begin{aligned}
\lim_{j\rightarrow 1^-}&\pi(U_{\rm FP}(\SPACE{O});\VECTOR{\psi})=\pi([1,2,1,5]^\top;\VECTOR{\psi})\\
&\not\equiv
\lim_{j\rightarrow 1^+}\pi(U_{\rm FP}(\SPACE{O});\VECTOR{\psi})=\pi([1,5,1,2]^\top;\VECTOR{\psi}),\ \forall\VECTOR{\psi}.
\end{aligned}
\end{equation}
The policy discontinuity introduced by FA representations may bring difficulties to policy learning since the expected policy $F_{\rm PI}(\SPACE{X}, x_{\rm else})$ should be continuous w.r.t. to each element in $\mathcal{X}$. Besides, it is usually difficult to design a proper sorting rule for complex driving scenarios such as intersections.

To conclude, due to the permutation sensitivity, AP and FP methods suffer from high sample complexity and policy discontinuity respectively, which may result in poor policy learning accuracy.

\section{Encoding Sum and Concatenation State Representation}
\label{sec:ESC}
Both dimension sensitivity and permutation sensitivity will increase the policy learning difficulty and limit the applicability of T2E decision-making in different driving scenarios. In the last decade, permutation-invariance approximation methods have been extensively studied \cite{zaheer2017deepset, maron2020Deepset, sannai2019universalDeepset}. However, these methods are only applicable to (a) countable case where $x_i$ is from a finite set, or (b) uncountable case with fixed set size $M$ of $\mathcal{X}$ where $x_i$ is from a continuous space, but barely valid on uncountable case with a variable set size $M$. In this section, the existing permutation-invariance approximation theory is extended to the field of state representation in autonomous driving, and an encoding sum and concatenation (ESC) method is proposed to realize the fixed-dimensional and permutation invariant state representation of the observation set $\SPACE{O}$.

\subsection{State Representation}
\label{subsec.PI_module}
As shown in Fig. \ref{fig.PI_method},  the mathematical description of the proposed ESC state representation is
\begin{equation}
\label{eq.pi_state}
    \VECTOR{s}=U_{\rm ESC}(\SPACE{O};\VECTOR{\phi})=\left[\begin{array}{c}\VECTOR{x}_{\rm set}\\\VECTOR{x}_{\rm else}\end{array}\right]=\left[\begin{array}{c}\sum_{\VECTOR{x}\in\SPACE{X}}h(\VECTOR{x};\VECTOR{\phi})\\\VECTOR{x}_{\rm else}\end{array}\right],
\end{equation}
where $h(\VECTOR{x};\VECTOR{\phi}):\mathbb{R}^{d_1}\rightarrow\mathbb{R}^{d_3}$ is the feature NN with parameters $\VECTOR{\phi}$ and $d_3$ is the output dimension. Different from $U_{\rm AP}$ and $U_{\rm FP}$, the ESC mapping $U_{\rm ESC}(\SPACE{O};\VECTOR{\phi})$ is a parameterized function. ESC first encodes each $\VECTOR{x}$ in the set $\SPACE{X}$ into the corresponding encoding vector $\VECTOR{x}_{\rm encode}\in\mathbb{R}^{d_3}$, i.e.,
\begin{equation}
\VECTOR{x}_{\rm encode}=h(\VECTOR{x};\VECTOR{\phi}).
\end{equation}
Then, we obtain the representation vector $x_{\rm set}$ of the surrounding vehicles set by summing the encoding vector of each surrounding vehicle
\begin{equation}
\label{eq04:PI_encode}
\VECTOR{x}_{\rm set}=\sum_{\VECTOR{x}\in\SPACE{X}}h(\VECTOR{x};\VECTOR{\phi}).
\end{equation}
\begin{figure}[!htb]
    \captionsetup{justification=raggedright, 
                  singlelinecheck=false, font=small}
    \centering{\includegraphics[width=0.4\textwidth]{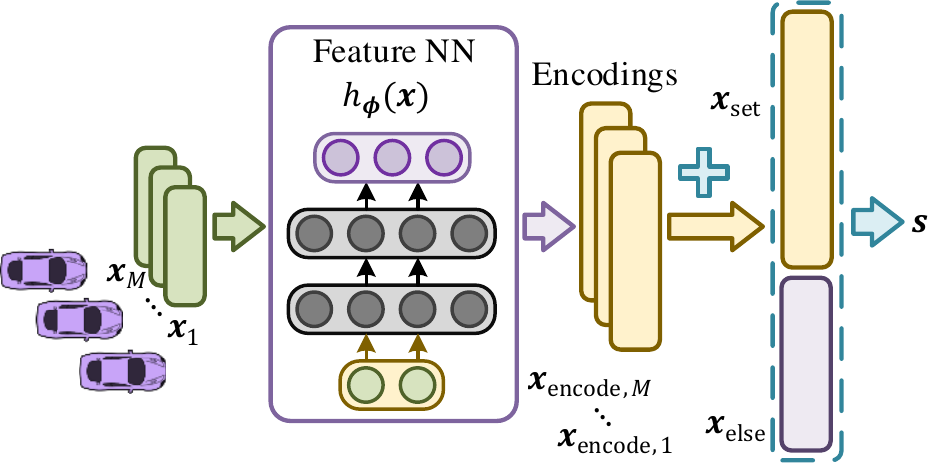}}
    \caption{ESC state representation.}
    \label{fig.PI_method}
\end{figure}

From \eqref{eq04:PI_encode}, it is clear that $\text{dim}(\VECTOR{x}_{\rm set})=\text{dim}(h_{\VECTOR{\phi}})=d_3$ for $\forall M \in [1,N]\cap\mathbb{N}$. In other words, $\VECTOR{x}_{\rm set}$ is fixed-dimensional. Furthermore, the summation operator in \eqref{eq04:PI_encode} is permutation invariant w.r.t. $\SPACE{X}$. Thus, $U_{\rm ESC}(\SPACE{O};\VECTOR{\phi})=[\VECTOR{x}_{\rm set}^\top,\VECTOR{x}_{\rm else}^\top]^\top$ is a fixed-dimensional and permutation invariant state representation of observation $\mathcal{O}$. Note that if $M=0$, one can add a virtual surrounding vehicle that is far away from the ego vehicle, which brings no effect on the decision-making.

By taking $U_{\rm ESC}(\SPACE{O};\VECTOR{\phi})$ as the inputs of $\pi_{\VECTOR{\psi}}$, the policy function can be expressed as 
\begin{equation}
\label{eq04:PI_formula}
\begin{aligned}
\pi(U_{\rm ESC}(\SPACE{O};\VECTOR{\phi});\VECTOR{\psi})=\pi(\sum_{\VECTOR{x}\in\SPACE{X}}h(\VECTOR{x};\VECTOR{\phi}),\VECTOR{x}_{\rm else};\VECTOR{\psi}),
\end{aligned}
\end{equation}
where $\pi(U_{\rm ESC}(\SPACE{O};\VECTOR{\phi});\VECTOR{\psi})$ is permutation invariant w.r.t. set $\SPACE{X}$. As shown in Fig. \ref{fig.PI_module}, the policy falls into two layers: (1) an ESC representation layer and (2) an decision layer. In the sequel, we refer to the policy function $\pi(U_{\rm ESC}(\SPACE{O};\VECTOR{\phi});\VECTOR{\psi})$ based on the ESC representations as the ESC policy. 

\begin{figure}[!htb]
    \captionsetup{justification=raggedright, 
                  singlelinecheck=false, font=small}
    \centering{\includegraphics[width=0.5\textwidth]{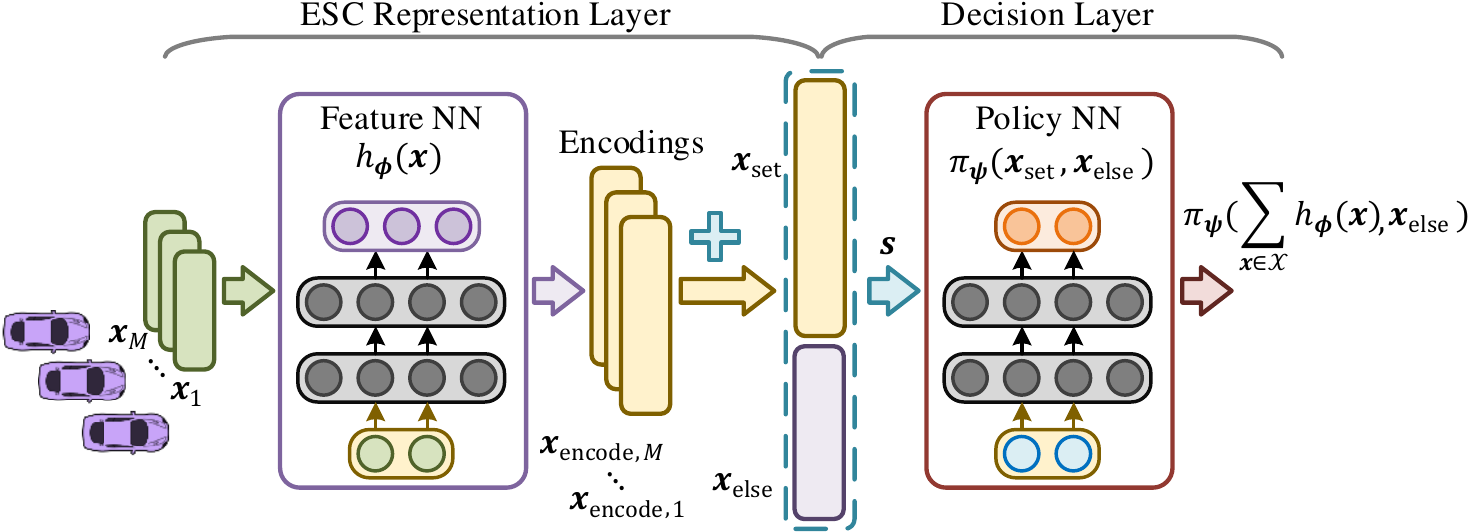}}
    \caption{Policy based on ESC state representation.}
    \label{fig.PI_module}
\end{figure}

\subsection{Injection and Optimality Analysis}
In addition to the fixed dimension and permutation invariance properties, to ensure the existence of $\psi^*$ and $\phi^*$, such that
\begin{equation}
\label{eq.goal}
\begin{aligned}
\pi(
U_{\rm ESC}(\SPACE{X},x_{\rm else}; \phi^*);\psi^*)&\approx F_{\rm PI}(\SPACE{X},x_{\rm else}), \\
& \forall\SPACE{X}\in \overline{\SPACE{X}},\forall x_{\rm else}\in\mathbb{R}^{d_2},
\end{aligned}
\end{equation}
the ESC state representation or ESC policy also needs to be injective w.r.t. the surrounding vehicles set $\SPACE{X}$.
If $U_{\rm ESC}$ is an injective mapping, for any  $\SPACE{X}_1,\SPACE{X}_2\in\overline{\SPACE{X}}$ where $ \SPACE{X}_1\neq\SPACE{X}_2$, it holds that $\sum_{\VECTOR{x}\in\SPACE{X}_1}h(\VECTOR{x};\VECTOR{\phi})\neq\sum_{\VECTOR{x}\in\SPACE{X}_2}h(\VECTOR{x};\VECTOR{\phi})$ or $U_{\rm ESC}(\{\SPACE{X}_1,\VECTOR{x}_{\rm else}\};\VECTOR{\phi})\neq U_{\rm ESC}(\{\SPACE{X}_2,\VECTOR{x}_{\rm else}\};\VECTOR{\phi})$. In contrast, if it is non-injective, there exist $\SPACE{X}_1,\SPACE{X}_2\in\overline{\SPACE{X}}$ where $\SPACE{X}_1\neq\SPACE{X}_2$, such that $U_{\rm ESC}(\{\SPACE{X}_1,\VECTOR{x}_{\rm else}\};\VECTOR{\phi})= U_{\rm ESC}(\{\SPACE{X}_2,\VECTOR{x}_{\rm else}\};\VECTOR{\phi})$. This indicates two different driving scenarios correspond to the identical state representation, which leads to the same policy outputs, thus impairing driving safety. Therefore, it is crucial to make sure that there $\exists \phi^{\dagger}$ such that $U_{\rm ESC}(\SPACE{X},x_{\rm else}; \phi^{\dagger})$ is injective.

Before proving the injectivity of the proposed ESC method, the following two lemmas are needed.

\begin{lemma} 
\label{lemma.UAT}
(Universal Approximation Theorem \cite{Hornik1990Universal}). For any continuous function $F(x):\mathbb{R}^n\rightarrow\mathbb{R}^d$ on a compact set $\Omega$, there exists an over-parameterized NN (i.e., the number of hidden neurons is sufficiently large), which uniformly approximates $F(x)$ and its gradient to within arbitrarily small error $\epsilon \in \mathbb{R}_{+}$ on $\Omega$.
\end{lemma}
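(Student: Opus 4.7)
The plan is to establish the lemma in two stages. First, I would prove density of single-hidden-layer networks of the form $\sum_{i=1}^{K} c_i\, \sigma(w_i^{\top}x + b_i)$ in $C(\Omega)$ under the uniform norm, following the classical Cybenko/Hornik route: let $\mathcal{N}_K$ denote the class of such networks and consider the closure of $\bigcup_K \mathcal{N}_K$ in $C(\Omega)$. Any bounded linear functional on $C(\Omega)$ annihilating every $\sigma(w^{\top}x + b)$ must, by Riesz representation together with the fact that a nonpolynomial $\sigma$ is discriminatory, annihilate all of $C(\Omega)$. An application of Hahn--Banach then forces $\bigcup_K \mathcal{N}_K$ to be uniformly dense in $C(\Omega)$, yielding approximation of $F$ itself within $\varepsilon/2$ for $K$ large enough.

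Next, to extend the approximation to the gradient, I would work in $C^{1}(\Omega)$ with the Sobolev-type norm $\|g\|_{1,\infty} = \|g\|_{\infty} + \|\nabla g\|_{\infty}$. The key identity is that for any infinitely differentiable $\sigma$, differentiating $\sigma(w^{\top}x + b)$ in the bias variable $b$ produces $\sigma^{(k)}(w^{\top}x + b)$, i.e.\ a network of the same form with the activation replaced by its $k$-th derivative. Consequently, finite-difference quotients in $b$ of $\sigma(w^{\top}x + b)$ lie inside $\mathcal{N}_K$, and their $x$-gradients converge uniformly on $\Omega$ by the chain rule together with compactness. Combining this with the Weierstrass polynomial theorem on the compact $\Omega$ (which gives an arbitrary polynomial approximator $P$ to $F$ in the $\|\cdot\|_{1,\infty}$ norm), and with the observation that every monomial is a limit of finite-difference combinations of $\sigma(w_i^{\top}x + b_i)$ whenever $\sigma$ is nonpolynomial, one obtains networks that approximate $F$ and $\nabla F$ simultaneously within $\varepsilon$, provided the width $K$ is taken sufficiently large.

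The main obstacle is precisely the simultaneous control: uniform closeness of two functions on $\Omega$ does not imply any closeness of their gradients, so a bare Cybenko-style density argument in $C(\Omega)$ is insufficient. The way around this is the smoothness trick above: by exploiting that $\partial_b^k \sigma(w^{\top}x + b)$ remains within the network class and that $\sigma^{(k)} \not\equiv 0$ for some $k$ whenever $\sigma$ is nonpolynomial, one ties the gradient error of the approximator directly to the function-approximation error, so a single enlargement of $K$ suppresses both. Compactness of $\Omega$ finally upgrades the pointwise convergence of the finite differences to uniform convergence, delivering the $\varepsilon$-bound on both $F$ and $\nabla F$ claimed by the lemma.
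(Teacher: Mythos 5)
The paper does not prove this lemma at all: it is imported as a citation to Hornik (1990), so there is no in-paper argument to compare yours against. Judged on its own, your two-stage plan follows the standard modern route (Cybenko duality for density in $C(\Omega)$, then a Pinkus-style argument placing all polynomials in the $C^1$-closure of the single-hidden-layer class), which is a legitimate and arguably more elementary path than Hornik's original argument via $\ell$-finite activations; note also that once the second stage is in place the first becomes redundant, since $C^1$-density of polynomials already implies uniform density.

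Two points need repair before the plan closes. First, the mechanism you describe for generating monomials is off: differentiating $\sigma(w^{\top}x+b)$ in $b$ only replaces the activation by $\sigma^{(k)}$ and produces no factor of $x$. The monomials come from differentiating in $w$: directional $w$-derivatives give $x^{\alpha}\,\sigma^{(|\alpha|)}(w^{\top}x+b)$, and evaluating the corresponding finite differences at $w=0$ with $b$ chosen so that $\sigma^{(|\alpha|)}(b)\neq 0$ (possible exactly because $\sigma$ is not a polynomial) places $x^{\alpha}$ in the closure. Since those finite differences are themselves networks and converge together with their $x$-gradients uniformly on the compact $\Omega$ when $\sigma$ is smooth, this is what actually ties the gradient error to the function error as you intend. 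Second, your plan silently strengthens the hypotheses, and rightly so: the gradient claim requires $F\in C^{1}$ (the lemma's ``any continuous function'' is too weak for $\nabla F$ to exist), a smooth nonpolynomial activation (satisfied by the GELU the paper uses), and a domain on which polynomial approximation holds in the $\|\cdot\|_{1,\infty}$ norm (e.g.\ a cube via Bernstein polynomials, or any compact set from which $F$ extends to a $C^{1}$ function on a cube --- plain Weierstrass in $C(\Omega)$ does not give this). With those hypotheses made explicit the argument is sound, and the vector-valued case $F:\mathbb{R}^{n}\rightarrow\mathbb{R}^{d}$ follows componentwise.
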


\begin{lemma}\label{lemma.sum-of-power}
    (Sum-of-power mapping\cite{zaheer2017deepset}).
    Let $\mathcal{Z}=\{z_1,\cdots,z_m\}$, where $z_i\in[0,1]$ for $i\in [1,m]$, and define a sum-of-power mapping $E_n$ as
    \begin{equation}
        E_n(\mathcal{Z})=\left[\begin{array}{c}\sum_{i=1}^m(z_i)^1\\\vdots\\\sum_{i=1}^m(z_i)^n\end{array}\right].
    \end{equation}
    The mapping $E_n(\mathcal{Z})\in \mathbb{R}^{n}$ is an injection (i.e. $\mathcal{Z}_1\neq \mathcal{Z}_2\rightarrow E_n(\mathcal{Z}_1)\neq E_n(\mathcal{Z}_2)$) if $n\ge m$.
\end{lemma}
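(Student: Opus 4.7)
My plan is to prove the lemma by reducing the injectivity of the sum-of-power mapping to the classical fact that the elementary symmetric polynomials, and hence the monic polynomial with prescribed roots, are determined by the first $m$ power sums. Suppose two multisets $\mathcal{Z}_1=\{z_1,\ldots,z_m\}$ and $\mathcal{Z}_2=\{w_1,\ldots,w_m\}$ (both of size $m$, possibly padded with zeros to match) satisfy $E_n(\mathcal{Z}_1)=E_n(\mathcal{Z}_2)$ with $n\ge m$. Writing $p_k(\mathcal{Z})=\sum_{i=1}^m z_i^k$, this assumption gives $p_k(\mathcal{Z}_1)=p_k(\mathcal{Z}_2)$ for $k=1,\ldots,n$, and in particular for $k=1,\ldots,m$.

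Next I would invoke Newton's identities, which relate the power sums to the elementary symmetric polynomials $e_k$ via the triangular recursion
\begin{equation*}
k\,e_k = \sum_{j=1}^{k}(-1)^{j-1} e_{k-j}\,p_{j}, \qquad k=1,\ldots,m.
\end{equation*}
Because the coefficient of $e_k$ on the left is $k\neq 0$, this recursion can be solved forward: $e_1$ is determined by $p_1$, $e_2$ by $p_1,p_2$ and $e_1$, and so on up to $e_m$. Applying the recursion to both multisets and using $p_k(\mathcal{Z}_1)=p_k(\mathcal{Z}_2)$ for $k\le m$, I would conclude by induction on $k$ that $e_k(\mathcal{Z}_1)=e_k(\mathcal{Z}_2)$ for all $k=1,\ldots,m$.

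The final step is the observation that the monic polynomials $Q_j(t)=\prod_{i=1}^{m}(t-z_i)$ and $Q_2(t)=\prod_{i=1}^m(t-w_i)$ have coefficients given (up to signs) by the elementary symmetric polynomials, so the equalities $e_k(\mathcal{Z}_1)=e_k(\mathcal{Z}_2)$ force $Q_1\equiv Q_2$. Two monic polynomials that agree on $\mathbb{R}[t]$ have the same multiset of roots, so $\mathcal{Z}_1=\mathcal{Z}_2$ as multisets, contradicting the hypothesis. Hence $E_n$ is injective whenever $n\ge m$, and the extra coordinates $p_{m+1},\ldots,p_n$ are harmless.

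The only genuinely delicate point, and the one I would flag as the main obstacle, is handling multisets of different effective sizes: if the definition allows $|\mathcal{Z}_1|\neq|\mathcal{Z}_2|$, one must pad the smaller set with zeros (which does not change any $p_k$ since $0^k=0$) and then check that after padding the conclusion still asserts equality of the original multisets. The bound $z_i\in[0,1]$ plays no role in the algebra and is inherited from the representation setting; what truly matters is that $m$ power sums suffice via Newton's identities, which is exactly the content of the condition $n\ge m$.
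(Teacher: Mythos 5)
Your argument is correct, and it is essentially the canonical proof of this fact; note that the paper itself gives no proof of Lemma~\ref{lemma.sum-of-power} at all --- it is imported by citation from the DeepSets paper, whose own justification rests on the same classical chain (power sums $\to$ elementary symmetric polynomials via Newton's identities in characteristic zero $\to$ the monic polynomial $\prod_i(t-z_i)$ $\to$ the multiset of roots). So there is nothing in the paper to diverge from, and your write-up fills a gap the authors left to the reference. One caution on the point you flag as delicate: zero-padding does \emph{not} rescue injectivity across different set sizes when $0$ lies in the admissible range, since for example $\{1/2\}$ padded to $\{0,1/2\}$ has exactly the same power sums as the genuine two-element set $\{0,1/2\}$, so the map cannot distinguish them. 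The lemma as stated should therefore be read with $m$ fixed (which is how the paper uses it in the proof of Theorem~\ref{theorem.PI}, where injectivity of $E_n$ is invoked ``when $M$ is a fixed value''), and the variable-size case is handled there not by padding but by appending the cardinality coordinate $\sum_1^M 1$ to the representation. With that reading, your proof is complete; the interval constraint $z_i\in[0,1]$ is indeed irrelevant to the algebra and only matters for the normalization step in Theorem~\ref{theorem.PI}.
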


Then, the main theorem is given as follows.

\begin{theorem}\label{theorem.PI}
 (Injectivity of the ESC State Representation). Let $\SPACE{O}=\{\SPACE{X},\VECTOR{x}_{\rm else}\}$, where $\VECTOR{x}_{\rm else}\in\mathbb{R}^{d_2}$ and $\mathcal{X}=\{x_1,x_2,\cdots,x_M\}$. Denote the space of $\SPACE{X}$ as $\overline{\SPACE{X}}$, where $\overline{\mathcal{X}}=\{\mathcal{X}|\mathcal{X}=\{x_1,\cdots,x_{M}\},x_i\in[c_{\rm min}, c_{\rm max}]^{d_1},i\le M,M\in[1,N]\cap\mathbb{N}\}$, in which $c_{\rm min}$ and $c_{\rm max}$ are the lower and upper bounds of all elements in $\forall\VECTOR{x_i}$, respectively. Noted that the size $M$ of the set $\mathcal{X}$ is variable. If the feature NN $h(\VECTOR{x};\VECTOR{\phi}):\mathbb{R}^{d_1}\rightarrow\mathbb{R}^{d_3}$ is over-parameterized with a linear output layer, and its output dimension $d_3\ge Nd_1+1$, there always $\exists \phi^{\dagger}$ such that the mapping $U_{\rm ESC}(\mathcal{O};\phi^{\dagger}): \overline{\mathcal{X}}\times \mathbb{R}^{d_2}\rightarrow \mathbb{R}^{d_3+d_2}$ in \eqref{eq.pi_state} is injective.
\end{theorem}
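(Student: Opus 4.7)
The plan is to prove Theorem \ref{theorem.PI} by combining the sum-of-power construction of Lemma \ref{lemma.sum-of-power} with the universal approximation theorem of Lemma \ref{lemma.UAT}, following the recipe of Deep Sets but carefully extended from (i) scalar inputs to vector inputs in $[c_{\min},c_{\max}]^{d_1}$ and (ii) fixed set size to variable $M\in[1,N]\cap\mathbb{N}$.

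First I would reduce the problem. Because $\VECTOR{x}_{\rm else}$ enters $U_{\rm ESC}$ through pure concatenation, $U_{\rm ESC}$ is injective in $\SPACE{O}=(\SPACE{X},\VECTOR{x}_{\rm else})$ if and only if the \emph{set-aggregation} map $T_{\phi}:\SPACE{X}\mapsto \sum_{\VECTOR{x}\in\SPACE{X}}h(\VECTOR{x};\VECTOR{\phi})$ is injective on $\overline{\SPACE{X}}$. So I only need to construct $\VECTOR{\phi}^{\dagger}$ such that $T_{\phi^{\dagger}}$ separates distinct multisets of size up to $N$.

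Next I would exhibit an explicit continuous target function $h^{*}:[c_{\min},c_{\max}]^{d_1}\to\mathbb{R}^{Nd_1+1}$ whose induced sum $T^{*}(\SPACE{X})=\sum_{\VECTOR{x}\in\SPACE{X}}h^{*}(\VECTOR{x})$ is injective, and then invoke Lemma \ref{lemma.UAT} to approximate it by the feature NN. The construction would proceed by (a) choosing a continuous scalar encoding $\eta:[c_{\min},c_{\max}]^{d_1}\to[0,1]$ that is injective on the union of all admissible point sets (a generic polynomial combination of the $d_1$ coordinates works after a standard measure-zero exclusion of the ``collision'' parameter set), (b) stacking the first $Nd_1$ powers $(\eta,\eta^{2},\ldots,\eta^{Nd_1})^{\top}$, and (c) appending the constant $1$ as the $(Nd_1{+}1)$-st coordinate so that $\sum_{\VECTOR{x}\in\SPACE{X}}1=M$ records the cardinality and resolves the variable-size case. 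Applying Lemma \ref{lemma.sum-of-power} to the scalars $\{\eta(\VECTOR{x}_1),\ldots,\eta(\VECTOR{x}_M)\}$ (a multiset of size $M\le N\le Nd_1$) guarantees that the power-sum block of $T^{*}(\SPACE{X})$ recovers the multiset $\{\eta(\VECTOR{x}_i)\}$, and injectivity of $\eta$ then recovers $\SPACE{X}$ itself; the cardinality coordinate rules out collisions between sets of different sizes.

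Then I would transfer injectivity from $h^{*}$ to a neural $h(\cdot;\VECTOR{\phi}^{\dagger})$ via a standard ``margin+approximation'' argument. The space $\overline{\SPACE{X}}$ is a finite union of compact symmetric products, hence compact under a natural set metric; since $T^{*}$ is continuous and injective there, on the compact subset $\{(\SPACE{X}_1,\SPACE{X}_2):\SPACE{X}_1\neq\SPACE{X}_2\}$ the gap $\|T^{*}(\SPACE{X}_1)-T^{*}(\SPACE{X}_2)\|$ admits a strictly positive lower bound away from the diagonal, and near the diagonal one uses continuity of $h^{*}$ together with a covering argument to keep the sums separated. Lemma \ref{lemma.UAT} supplies an over-parameterized NN with linear output layer that uniformly $\epsilon$-approximates $h^{*}$ on $[c_{\min},c_{\max}]^{d_1}$; since $\|T_{\phi^{\dagger}}(\SPACE{X})-T^{*}(\SPACE{X})\|\le N\epsilon$, choosing $\epsilon$ smaller than half the margin preserves injectivity of the sum, yielding the desired $\VECTOR{\phi}^{\dagger}$.

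The main obstacle I foresee is Step (a) of the construction: producing one \emph{continuous} scalar encoding $\eta$ that is simultaneously injective on \emph{every} admissible configuration of up to $N$ points in $[c_{\min},c_{\max}]^{d_1}$. A naive multivariate moment basis would need $\binom{N+d_1}{d_1}$ components, far exceeding the tight bound $Nd_1+1$, so the argument genuinely relies on compressing the $d_1$ coordinates into a single scalar before applying Lemma \ref{lemma.sum-of-power}. Guaranteeing that a single continuous $\eta$ can simultaneously separate \emph{all} $N$-point configurations (rather than just a generic one) is the delicate part and will likely need either a polynomial $\eta$ with algebraically independent coefficients together with a Baire-category or measure-theoretic genericity argument, or an explicit almost-everywhere injective encoding regularized by a small continuous perturbation absorbed into the UAT error budget.
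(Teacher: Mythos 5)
Your reduction to injectivity of the set--aggregation map and your use of the cardinality coordinate $\sum_{\VECTOR{x}\in\SPACE{X}}1=M$ to handle variable $M$ match the paper, but step (a) of your construction contains a fatal gap rather than a merely ``delicate'' one: for $d_1\ge 2$ there is \emph{no} continuous injective map $\eta:[c_{\rm min},c_{\rm max}]^{d_1}\rightarrow[0,1]$. Restricting any such $\eta$ to a circle inside the cube would give a continuous bijection from $S^1$ onto a compact connected subset of $\mathbb{R}$, i.e.\ a homeomorphism onto a closed interval, which is impossible since removing an interior point disconnects the interval but not the circle. This is a topological obstruction, so no genericity, Baire-category, or measure-zero-exclusion argument over polynomial coefficients can rescue it: every continuous scalar $\eta$ on a $d_1$-dimensional cube has uncountably many collisions. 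Your fallback---an almost-everywhere injective but discontinuous encoding (e.g.\ digit interleaving) ``regularized'' into the UAT error budget---also fails, because Lemma \ref{lemma.UAT} only approximates continuous targets and an $\epsilon$-perturbation of a discontinuous injection is neither continuous nor injective. A secondary problem is your margin argument: the off-diagonal set $\{(\SPACE{X}_1,\SPACE{X}_2):\SPACE{X}_1\neq\SPACE{X}_2\}$ is not compact, distinct sets can have $T^{*}$-images arbitrarily close, so no uniform bound $\|T_{\phi^{\dagger}}-T^{*}\|\le N\epsilon$ can preserve injectivity everywhere; the covering argument you gesture at near the diagonal does not close this.

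The paper avoids scalar compression entirely: it splits each $\VECTOR{x}_i$ into its $d_1$ coordinates, min--max normalizes each coordinate multiset $\VECTOR{X}_{{\rm norm},j}$ into $[0,1]$, applies the scalar sum-of-power map $E_N$ of Lemma \ref{lemma.sum-of-power} to each coordinate separately, and stacks the $d_1$ blocks of $N$ power sums together with the counter, giving exactly $Nd_1+1$ outputs; Lemma \ref{lemma.UAT} then realizes the required monomial features with the feature NN. You should note, however, that your own observation about the moment count cuts against that route too: knowing the $d_1$ marginal multisets $\{x_{1,j},\dots,x_{M,j}\}$ for each $j$ does not determine the joint multiset of vectors (e.g.\ $\{(0,0),(1,1)\}$ and $\{(0,1),(1,0)\}$ share all marginals), so the coordinate-wise decomposition discards the pairing information between coordinates. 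In short, your proposal takes a genuinely different decomposition (scalar encoding then one power-sum block, versus the paper's per-coordinate power-sum blocks), but the key existence claim it rests on is false, and the approximation-transfer step is also unsound as written.
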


\begin{proof}
Let $\VECTOR{x}_i=[x_{i,1},\cdots,x_{i,d_1}]^{\top}$. We concatenate the $j$th element of each $\VECTOR{x}_i$ into the set $\VECTOR{X}_{j}=\{x_{1,j},\cdots,x_{M,j}\}$.  By normalizing $\VECTOR{X}_{j}$ using the min-max scaling method, for $\forall j\in[1,d_1]$, we will get
\begin{equation}
\VECTOR{X}_{{\rm norm},j}=\big\{\frac{x_{1,j}-c_{\rm min}}{c_{\rm max}-c_{\rm min}},\cdots,\frac{x_{M,j}-c_{\rm min}}{c_{\rm max}-c_{\rm min}}\big\}.
\end{equation}
According to Lemma \ref{lemma.sum-of-power}, when $n \ge M$, the sum-of-power mapping $E_n(\VECTOR{X}_{{\rm norm},j})$ expressed as 
\begin{equation}
\label{eq04:injective_mapping}
E_n(\VECTOR{X}_{{\rm norm},j})=\left[\begin{array}{c}\sum_{i=1}^M\big(\frac{x_{i,j}-c_{\rm min}}{c_{\rm max}-c_{\rm min}}\big)^1\\\vdots\\\sum_{i=1}^M\big(\frac{x_{i,j}-c_{\rm min}}{c_{\rm max}-c_{\rm min}}\big)^n\end{array}\right]
\end{equation}
is injective when $M$ is a fixed value. 

From \eqref{eq04:injective_mapping}, since $N\ge M$, the mapping $G$ defined as 
\begin{equation}
\label{eq.G_map}
G\left( \{\VECTOR{X}_{\rm{norm},1},\cdots,\VECTOR{X}_{\rm{norm},d_1}\}\right)=\left[\begin{array}{c}E_N(\VECTOR{X}_{\rm{norm},1})\\\vdots\\E_N(\VECTOR{X}_{\rm{norm},d_1})\\\sum_1^M1\end{array}\right]
\end{equation}
is also injective. In particular, the existence of item $\sum_1^M1$ makes the mapping $G$ also suitable for the case where the set size $M$ is variable.

Furthermore, according to Lemma \ref{lemma.UAT}, when $d_3\ge Nd_1+1$, there always $\exists \phi^{\dagger}$, such that
\begin{equation}
h(\VECTOR{x}_i;\VECTOR{\phi}^{\dagger})= \left[\begin{array}{c}\big(\frac{x_{i,1}-c_{\rm min}}{c_{\rm max}-c_{\rm min}}\big)^1\\\vdots\\\big(\frac{x_{i,1}-c_{\rm min}}{c_{\rm max}-c_{\rm min}}\big)^N\\\vdots \\ \big(\frac{x_{i,d_1}-c_{\rm min}}{c_{\rm max}-c_{\rm min}}\big)^1\\\vdots\\\big(\frac{x_{i,d_1}-c_{\rm min}}{c_{\rm max}-c_{\rm min}}\big)^N \\1\\\vdots \end{array}\right],\ \forall \VECTOR{x}_i\in[c_{\rm min},c_{\rm max}]^{d_1}.
\end{equation}
Then, it directly follows that
\begin{equation}
\VECTOR{x}_{\rm set}=\sum_{\VECTOR{x}\in\SPACE{X}}h(\VECTOR{x};\phi^{\dagger})=\left[\begin{array}{c}E_N(\VECTOR{X}_{\rm{norm},1})\\\vdots\\E_N(\VECTOR{X}_{\rm{norm},d_1})\\\sum_1^M1\\\vdots\end{array}\right].
\end{equation}
Similar to \eqref{eq.G_map},  $\sum_{\VECTOR{x}\in\SPACE{X}}h(\VECTOR{x};\phi^{\dagger}):\overline{\mathcal{X}}\rightarrow \mathbb{R}^{d_3}$ is injective, which means $U_{\rm PI}(\SPACE{O};\phi^{\dagger}): \overline{\mathcal{X}}\times \mathbb{R}^{d_2}\rightarrow \mathbb{R}^{d_3+d_2}$ is also injective. 
\end{proof}

Next, we will analyze the optimality of the ESC representation and the ESC policy.

\begin{lemma}
\label{lemma.global_min} (Global Minima of Over-Parameterized NNs \cite{allen2018convergence,du2019overconverge}).
Consider the following optimization problem
\begin{equation}
\nonumber
\min_{\psi}\mathcal{L}(\psi) = \mathop{\mathbb{E}}_{X_i\in\mathcal{B}}\Big\{\frac{1}{2}(\mathcal{F}(X_i;\psi)-Y_i)^2\Big\},
\end{equation}
where $X_i\in \mathbb{R}^n$ is the training input, $Y_i \in \mathbb{R}^d$ is the associated label, $\mathcal{B}=\{(X_1,Y_1),(X_2,Y_2),\hdots\}$ is the dataset, $\psi$ is the parameter to be optimized, and $\mathcal{F}:\mathbb{R}^n \rightarrow \mathbb{R}^d$ is an NN. If the NN $\mathcal{F}(X;\psi)$ is over-parameterized, simple algorithms such as gradient descent (GD) or stochastic GD (SGD) can find global minima on the training objective $\mathcal{L}(\psi)$ in polynomial time, as long as the dataset $\mathcal{B}$ is non-degenerate. The dataset is non-degenerate if the same inputs $X_1=X_2$ have the same labels $Y_1=Y_2$.
\end{lemma}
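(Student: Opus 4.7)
The plan is to invoke the neural tangent kernel (NTK) framework, which has become the canonical route to this kind of overparameterization result. Stack the network outputs into a residual vector $u_t\in\mathbb{R}^{|\mathcal{B}|d}$ with $(u_t)_i=\mathcal{F}(X_i;\psi_t)$, and define the data-dependent Gram matrix
\[
K_t(i,j)=\bigl\langle \nabla_\psi\mathcal{F}(X_i;\psi_t),\,\nabla_\psi\mathcal{F}(X_j;\psi_t)\bigr\rangle.
\]
A one-step Taylor expansion of $\mathcal{F}(X_i;\psi_{t+1})$ around $\psi_t$ rewrites gradient descent in residual form as $u_{t+1}-Y=(I-\eta K_t)(u_t-Y)+\epsilon_t$, where $\epsilon_t$ collects the second-order remainder. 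If one can show that $\lambda_{\min}(K_t)\ge\lambda/2$ and that $\|\epsilon_t\|$ is negligible throughout training, then $\|u_t-Y\|$ contracts linearly and reaches any prescribed accuracy in $O(\log(1/\varepsilon)/\lambda)$ steps, giving the polynomial-time claim.

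The first step I would carry out is to show that at standard random initialization (Gaussian weights scaled by $1/\sqrt{\text{width}}$), the matrix $K_0$ concentrates around a deterministic, width-independent kernel $K^\infty$. Because $\mathcal{B}$ is non-degenerate, after normalizing inputs one can assume the $X_i$ are pairwise distinct in direction, and a classical argument on the induced kernel shows $\lambda_{\min}(K^\infty)>0$; matrix concentration then yields $\lambda_{\min}(K_0)\ge\lambda/2$ with high probability as soon as the width is polynomially large in $|\mathcal{B}|$, $1/\lambda$ and the failure probability. This is the one place where the non-degeneracy hypothesis is used in an essential way.

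The second step is the ``lazy training'' argument. I would prove by induction on $t$ that while the residual contracts linearly, $\|\psi_t-\psi_0\|$ stays inside a ball of radius $O(\|u_0-Y\|/\sqrt{\lambda})$ that does \emph{not} grow with the width, while both the kernel drift $\|K_t-K_0\|$ and the Taylor remainder $\|\epsilon_t\|$ scale like $\|\psi_t-\psi_0\|/\sqrt{\text{width}}$. Choosing the width sufficiently large guarantees $\lambda_{\min}(K_t)\ge\lambda/2$ for all $t$, closing the induction and delivering the desired convergence to a global minimum of $\mathcal{L}(\psi)$ in polynomial time.

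The hard part is precisely this self-consistent induction: the spectrum of $K_t$ controls how fast the residual decays, but $K_t$ itself depends on the current weights, which only stay near $\psi_0$ because the residual is decaying. I would have to pick the width, step size and stopping criterion so that three estimates are simultaneously compatible: (i) $K_0$ is positive definite with high probability, (ii) the weight excursion remains small enough to keep $K_t\approx K_0$ along the whole trajectory, and (iii) the nonlinear remainder $\epsilon_t$ is dominated by the linear contraction. Obtaining polynomial rather than exponential dependence on $|\mathcal{B}|$ and $1/\lambda$ in all three steps is the technical heart of the argument carried out in \cite{allen2018convergence,du2019overconverge}, and would be the main calculation I expect to grind through.
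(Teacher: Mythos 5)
The paper offers no proof of this lemma at all: it is imported verbatim as a known result from the cited references \cite{allen2018convergence,du2019overconverge}, so there is no in-paper argument to compare against. Your NTK/lazy-training outline is a faithful sketch of exactly the argument those references carry out (the Gram-matrix contraction and the self-consistent induction on the weight excursion are the core of Du et al.'s proof), so your proposal takes essentially the same route that the paper implicitly relies on.
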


\begin{theorem}
\label{theorem.optimality}
Given any continuous function operating on a the set $\mathcal{O}$, i.e., $F_{\rm PI}:\overline{\mathcal{X}} \times \mathbb{R}^{d_2}\rightarrow\mathcal{Y}$, which is permutation invariant to the elements in $\mathcal{X}$. Suppose $\mathcal{O}$, $\overline{\mathcal{X}}$, and $\mathcal{X}$ as described in Theorem \ref{theorem.PI}. If the feature NN $h(\VECTOR{x};\VECTOR{\phi}):\mathbb{R}^{d_1}\rightarrow\mathbb{R}^{d_3}$ and policy NN $\pi(U_{\rm ESC}(\SPACE{O};\VECTOR{\phi});\VECTOR{\psi}):\mathbb{R}^{d_2+d_3}\rightarrow\mathcal{Y}$ are both over-parameterized, and $d_3\ge Nd_1+1$, we can find $\phi^*$ and $\psi^*$ which make \eqref{eq.goal} hold by directly minimizing $L(\psi,\phi)$ using optimization methods such as GD and SGD, where
\begin{equation}
\label{eq.objective_PI}
\begin{aligned}
L(\psi,\phi)=\Exp_{\substack{\SPACE{X}\in\overline{\SPACE{X}},\\x_{\rm else}\in\mathbb{R}^{d_2}}}\big(\pi(U_{\rm ESC}(\SPACE{O};\phi);\VECTOR{\psi})-F_{\rm PI}(\SPACE{X},x_{\rm else})\big)^2.
\end{aligned}
\end{equation}
\end{theorem}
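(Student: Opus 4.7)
My plan is to prove Theorem \ref{theorem.optimality} in two stages. First, I will show that the infimum of $L(\psi,\phi)$ can be made arbitrarily small by exhibiting a pair $(\phi^{\dagger},\psi^{\dagger})$ that nearly satisfies \eqref{eq.goal}, chaining Theorem \ref{theorem.PI} with Lemma \ref{lemma.UAT}. Second, I will invoke Lemma \ref{lemma.global_min} to conclude that GD/SGD actually drives $L$ to this global minimum, so any $(\phi^*,\psi^*)$ returned by the algorithm satisfies \eqref{eq.goal}.

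For the existence step, I would apply Theorem \ref{theorem.PI} (using the assumed bound $d_3\ge Nd_1+1$) to obtain a parameter $\phi^{\dagger}$ for which $U_{\rm ESC}(\cdot;\phi^{\dagger})$ is injective on $\overline{\mathcal{X}}\times\mathbb{R}^{d_2}$. Because $F_{\rm PI}$ and the summation aggregator are both permutation invariant in $\mathcal{X}$, injectivity yields a well-defined function $\tilde F$ on the image of $U_{\rm ESC}(\cdot;\phi^{\dagger})$ satisfying $F_{\rm PI}=\tilde F\circ U_{\rm ESC}(\cdot;\phi^{\dagger})$. A short continuity argument---$h(\cdot;\phi^{\dagger})$ is continuous, finite summation is continuous, each fixed-$M$ slice of the domain is compact, and $F_{\rm PI}$ is continuous by assumption---shows $\tilde F$ is continuous on its compact image. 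Lemma \ref{lemma.UAT} then yields some $\psi^{\dagger}$ with $\pi(\cdot;\psi^{\dagger})$ uniformly $\epsilon$-approximating $\tilde F$, so $L(\psi^{\dagger},\phi^{\dagger})\le\epsilon^2$ for arbitrarily small $\epsilon$.

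For the algorithmic step, I would view the composite map $\pi(U_{\rm ESC}(\cdot;\phi);\psi)$ as a single NN with joint parameters $(\psi,\phi)$: it consists of $M$ weight-tied copies of $h(\cdot;\phi)$, followed by a linear summation aggregator, followed by $\pi(\cdot;\psi)$; since $h$ and $\pi$ are both over-parameterized, so is the composite. The training tuples $(\mathcal{O},F_{\rm PI}(\mathcal{X},x_{\rm else}))$ are non-degenerate in the sense of Lemma \ref{lemma.global_min}, because identical observation sets yield identical labels and permutation-equivalent sets are mapped both to the same label (by invariance of $F_{\rm PI}$) and to the same composite input (by invariance of the summation). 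Applying Lemma \ref{lemma.global_min} then guarantees that GD/SGD reaches a global minimum of $L$ in polynomial time, which the existence step shows to be essentially zero. The main obstacle I anticipate is justifying the applicability of Lemma \ref{lemma.global_min} to this composite, since that lemma is typically stated for a plain feed-forward NN without weight sharing or variable input length; I would handle this by stratifying the data by the set size $M\in[1,N]$, treating each stratum as a fixed-architecture over-parameterized feed-forward instance (the summation being a fixed linear layer), and arguing that the weight tying across the $M$ copies of $h$ only restricts but does not destroy over-parameterization.
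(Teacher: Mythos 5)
Your proposal follows essentially the same two-step route as the paper: first establish $\min_{\psi} L(\psi,\phi^{\dagger})\approx 0$ by chaining Theorem \ref{theorem.PI} with Lemma \ref{lemma.UAT}, then invoke Lemma \ref{lemma.global_min} to argue that GD/SGD actually attains this near-zero global minimum. You in fact supply more detail than the paper does---the explicit factorization $F_{\rm PI}=\tilde F\circ U_{\rm ESC}(\cdot;\phi^{\dagger})$ with its continuity argument, and the stratification-by-$M$ discussion for applying Lemma \ref{lemma.global_min} to the weight-tied composite network---points the paper leaves implicit.
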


\begin{proof}
From Theorem \ref{theorem.PI}, there $\exists \phi^{\dagger}$ such that $U_{\rm ESC}(\mathcal{O};\phi^{\dagger}): \overline{\mathcal{X}}\times \mathbb{R}^{d_2}\rightarrow \mathbb{R}^{d_3+d_2}$ in \eqref{eq.pi_state} is injective. Therefore, from Lemma \ref{lemma.UAT}, one has
\begin{equation}
\min_{\psi} L(\psi,\phi^{\dagger})\approx 0.
\end{equation}
In other words, there exists a pair of $\psi$ and $\phi$, which makes $\pi(U_{\rm ESC}(\SPACE{O};\VECTOR{\phi});\VECTOR{\psi})$ approximate $F_{\rm PI}$ arbitrarily close. Although the nearly optimal parameters may not be unique, according to Lemma \ref{lemma.global_min}, we can find a pair of $\phi^*$ and $\psi^*$ which makes \eqref{eq.goal} hold by directly minimizing $L(\psi,\phi)$ using optimization methods such as GD and SGD. 
\end{proof}
The existence of the injective mapping given in Theorem \ref{theorem.PI} can ensure that  $\min_{\{\psi,\phi\}}L(\psi,\phi)\approx 0$ holds. Although the solution $U_{\rm ESC}(\SPACE{O};\phi^*)$ found in Theorem \ref{theorem.optimality} may be non-injective, $\pi(U_{\rm ESC}(\SPACE{O};\phi^*);\psi^*)$ still approximates the target function $F_{\rm PI}$ arbitrarily close. 

\begin{remark}
The feature NN $h_{\VECTOR{\phi}}$ is only related to the space $\overline{\mathcal{X}}$ of set $\mathcal{X}$, but is independent of function $F_{\rm PI}$. This indicates that for any different continuous permutation invariant functions $F_{\rm PI,1}$ and $F_{\rm PI,2}$ operating on set $\mathcal{O}$, for the same injective mapping  $U_{\rm ESC}(\mathcal{O};\phi^{\dagger})$, there exist $\psi_1$ and $\psi_2$ assuring
\begin{equation}
    \nonumber
        \pi(U_{\rm PI}(\SPACE{O};\VECTOR{\phi}^{\dagger});\VECTOR{\psi}_1)\approx F_{\rm PI,1}(\SPACE{X},\VECTOR{x}_{\rm else}) 
\end{equation}    
and
\begin{equation}
    \nonumber
        \pi(U_{\rm PI}(\SPACE{O};\VECTOR{\phi}^{\dagger});\VECTOR{\psi}_2)\approx F_{\rm PI,2}(\SPACE{X},\VECTOR{x}_{\rm else})
\end{equation}
for $\forall\SPACE{X}\in\overline{\SPACE{X}}$ and $\forall\VECTOR{x}_{\rm else}\in\mathbb{R}^{d_2}$, respectively.
\end{remark}
\begin{remark}
In Theorem \ref{theorem.PI} and \ref{theorem.optimality}, we require the feature of each surrounding vehicle to satisfy $x \in[c_{\rm min}, c_{\rm max}]^{d_1}$,  where $c_{\rm min}$ and $c_{\rm max}$ are the lower and upper bounds of all elements in $x$. We know that for actual autonomous driving applications, the range of different indicators, such as velocity or heading angle, may be different. However, by utilizing some normalization methods, such as min-max feature scaling, we can easily normalize each element to the same range.
\end{remark}

\section{Experimental Verification}
\label{sec:exp}
\begin{table*}[!htb]
\centering
\caption{Benchmarks}
\label{tab04:task}
\begin{tabular}{cc}
\toprule[1.5pt]
 No.& Expected permutation invariant policy functions\\
\midrule[1pt]
1&$F_{\rm PI}(\SPACE{X},\VECTOR{x}_{\rm else})={\rm mean}(\VECTOR{x}_{\rm else})-0.2\min([\|\VECTOR{x}_1\|_3,\cdots,\|\VECTOR{x}_M\|_3])+0.4{\rm mean}([\|\VECTOR{x}_1\|_1,\cdots,\|\VECTOR{x}_M\|_1])\times\max([\|\VECTOR{x}_1\|_2,\cdots,\|\VECTOR{x}_M\|_2]) $\\
\midrule[1pt]
2&$F_{\rm PI}(\SPACE{X},\VECTOR{x}_{\rm else})=0.5\min(\VECTOR{x}_{\rm else})\times\max([\max(\VECTOR{x}_1),\cdots,\max(\VECTOR{x}_M)])\times\min([\|\VECTOR{x}_1\|_4,\cdots,\|\VECTOR{x}_M\|_4])$\\
\midrule[1pt]
3&$F_{\rm PI}(\SPACE{X},\VECTOR{x}_{\rm else})=0.2\|\VECTOR{x}_{\rm else}\|_3+2{\rm mean}([\|\VECTOR{x}_1\|_1,\cdots,\|\VECTOR{x}_M\|_1])\times{\rm mean}([\max(\VECTOR{x}_1),\cdots,\max(\VECTOR{x}_M)])$\\
\midrule[1pt]
4&$F_{\rm PI}(\SPACE{X},\VECTOR{x}_{\rm else})=5\|\VECTOR{x}_{\rm else}\|_2\times \Big\|\Big[\frac{\min(\VECTOR{x}_1)}{\|\VECTOR{x}_{1}\|_2+0.1},\cdots,\frac{\min(\VECTOR{x}_M)}{\|\VECTOR{x}_{M}\|_2+0.1}\Big]\Big\|_4$ \\
\midrule[1pt]
5&$F_{\rm PI}(\SPACE{X},\VECTOR{x}_{\rm else})=10\|\VECTOR{x}_{\rm else}\|_4\times{\rm mean} \Big[\frac{{\rm mean}(\VECTOR{x}_1)\max(\VECTOR{x}_1)}{\|\VECTOR{x}_{1}\|_4+0.1},\cdots,\frac{{\rm mean}(\VECTOR{x}_M)\max(\VECTOR{x}_M)}{\|\VECTOR{x}_{M}\|_4+0.1}\Big]$ \\
\midrule[1pt]
6&$F_{\rm PI}(\SPACE{X},\VECTOR{x}_{\rm else})=8\|\VECTOR{x}_{\rm else}\|_2\times \max\Big[\frac{{\rm mean}(\VECTOR{x}_1)\|\VECTOR{x}_{1}\|_3}{\|\VECTOR{x}_{1}\|_2+0.1},\cdots,\frac{{\rm mean}(\VECTOR{x}_M)\|\VECTOR{x}_{M}\|_3}{\|\VECTOR{x}_{M}\|_2+0.1}\Big]$ \\
\bottomrule[1.5pt]
\end{tabular}
\end{table*}

This section validates the effectiveness of the proposed ESC method in a general policy learning task based on supervised learning. We take AP and FP representation methods as baselines.

\subsection{Experiments Design}
We set the dimension of $\VECTOR{x}$ to $d_1=5$, and each element of $x$ is bounded by $c_{\rm{min}}=-5$ and $c_{\rm{max}}=5$, i.e., $\VECTOR{x}\in [-5,5]^5$. Similarly, we set $\VECTOR{x}_{\rm else}\in [-5,5]^{10}$. We assume that the maximum size of set $\mathcal{X}$ is $N=20$, i.e., $M\in[1,20]$. Based on these settings, we construct six expected policy functions in Table \ref{tab04:task} as benchmarks. Noted that $\min(\VECTOR{z})$, $\max(\VECTOR{z})$, ${\rm mean}(\VECTOR{z})$ in Table \ref{tab04:task} represent taking the minimum, maximum, and mean value of elements in $\VECTOR{z}$, respectively, and $\|\VECTOR{z}\|_p$ denotes the $p$-norm of $\VECTOR{z}$. $\min(\VECTOR{z})$, $\max(\VECTOR{z})$, and ${\rm mean}(\VECTOR{z})$ are three typical permutation invariant operators. For example, the vehicle closest to the ego vehicle is usually an important reference for decision-making. The combination of these three operators can form many representative nonlinear permutation invariant functions. 

We will learn a policy to approximate each benchmark using different state representation methods. Then the performance of the ESC method can be evaluated by comparing the policy approximation accuracy of different representations. As shown in Table \ref{tab04:experiment_plan}, according to the set size of $\mathcal{X}$, the experiment for each benchmark is divided into five cases, $M=5$, $M=10$, $M=15$, $M=20$ and $M\in[1,20]$. In particular, only ESC is applicable to variable size set $\mathcal{X}$, that is, case 5.

\begin{table}[!htb]
\centering
\caption{Five experimental settings of each benchmark}
\label{tab04:experiment_plan}
\begin{tabular}{ccc}
\toprule[1.5pt]
Case & Set Size $M$ & Representation methods\\
\midrule[1pt]
1 & $5$ & 1) ESC; 2) FP; 3) AP\\
2 & $10$ & 1) ESC; 2) FP; 3) AP\\
3 & $15$ & 1) ESC; 2) FP; 3) AP\\
4 & $20$ & 1) ESC; 2) FP; 3) AP\\
5 & $M\in[1,20]$ & ESC\\
\bottomrule[1.5pt]
\end{tabular}
\end{table}

For each case of each benchmark, we randomly generated a training set $\SPACE{S}_{\rm train}$ containing one million samples and a test set $\SPACE{S}_{\rm test}$ containing 2048 samples. The $j$th sample in $\SPACE{S}_{\rm train}$ or $\SPACE{S}_{\rm test}$ is denoted as $\{\{\SPACE{X}_j,{\VECTOR{x}_{\rm else}}_j\},y_j\}$, where $\SPACE{X}_j$ and ${\VECTOR{x}_{\rm else}}_j$ are sampled uniformly within their space, and $y_j=F_{\rm PI}(\SPACE{X}_j,{\VECTOR{x}_{\rm else}}_j)$. Given $\SPACE{S}_{\rm train}$, the policy NN $\pi_{\psi}$ (and the feature NN $h_{\phi}$ for ESC) based on the AP, FP, and ESC are optimized by directly minimizing \eqref{eq.all_permutaion}, \eqref{eq.order_permutation} and \eqref{eq.objective_PI}, respectively. For the FP method, the pre-designed order $o$ arranges the elements of $\mathcal{X}$ in increasing order according to the first element of $x_i$. If the first element is equal,  we will compare the second element, and so on. 

\begin{remark}
According to Theorem \ref{theorem.PI} and \ref{theorem.optimality}, the proposed ESC method is a general state representation method which is suitable for the uncountable case where $x_i$ comes from a continuous space with a variable set size $M$. It can be applicable in many fields, such as UAVs control and autonomous driving. The experiments provided in this section mainly focus on the evaluation of state representation ability of the proposed method for general permutation-invariance functions. Therefore, the experiment is not designed based on specific driving tasks. The combination of the ESC representation and policy learning methods such as RL, and their application in autonomous driving will be studied in the future.
\end{remark}

\subsection{Training Details}

For the ESC method, we  use  a  fully  connected NN  with  five hidden  layers,  consisting  of  256  units  per layer,  with  Gaussian  Error  Linear  Units  (GELU) as activation functions for each layer \cite{hendrycks2016gelu}, for both  feature NN and policy NN (See Figure \ref{fig04:PI-NN}).  The output layer of each NN is linear. According to Theorem \ref{theorem.PI}, the output dimension $d_3$ of $h_{\phi}$ should satisfy that $d_3\ge Nd_1+1=101$, so we set $d_3=101$. 

\begin{figure}[!htb]
    \centering
\captionsetup{singlelinecheck = false,labelsep=period, font=small}
\captionsetup[subfigure]{justification=centering}
        \subfloat[ ]{\label{fig04:PI-NN}\includegraphics[width=0.425\textwidth]{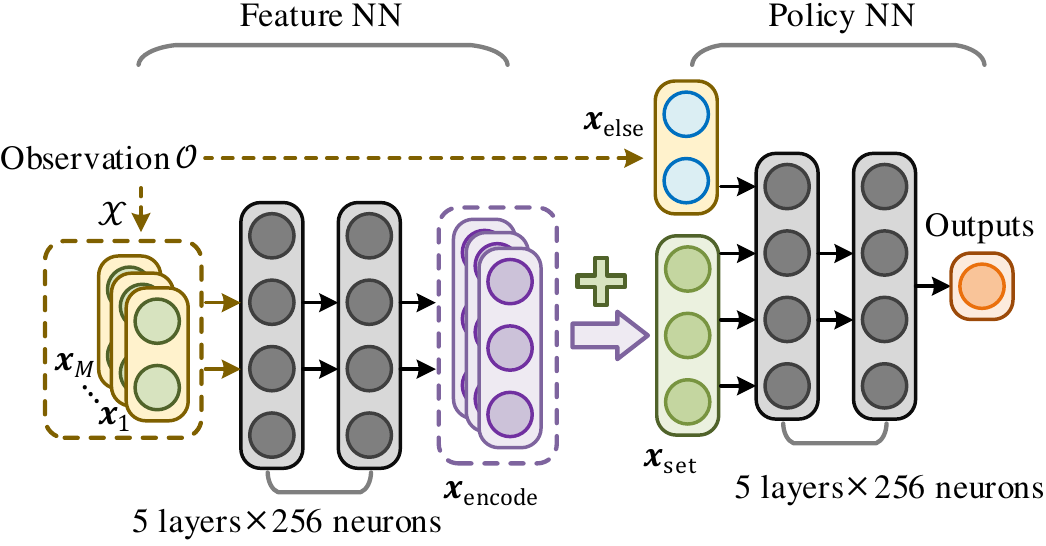}}\\
        \subfloat[]{\label{fig04:PS-NN}\includegraphics[width=0.35\textwidth]{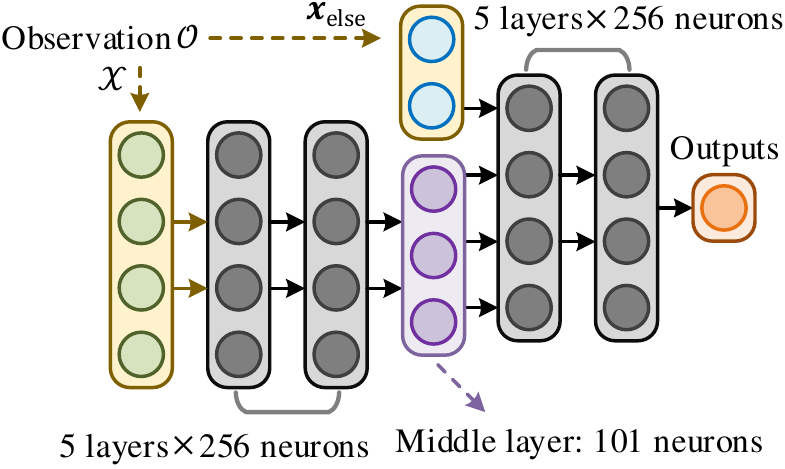}}
    \caption{NN architecture. (a) The NN architecture of the ESC method. (b) The NN architecture of AP and FP methods.}
    \label{fig04:NN_architecture}
\end{figure}

Unlike the ESC method containing two NNs, AP and FP only need to learn a policy NN. To avoid the influence of different NN architectures on policy learning accuracy, the policy NN for these two methods is designed as shown in Figure \ref{fig04:PS-NN}. This architecture comprises 11 hidden layers, in which each layer contains 256 units with GELU activations, except for the middle layer (i.e., the $6$th layer). The middle layer is a linear layer containing 101 units, which is equal to the output dimension of $h_{\phi}$. The input dimension is $5M$, which is related to the set size of $\mathcal{X}$. Therefore, the approximation structures in Figure \ref{fig04:PI-NN} and \ref{fig04:PS-NN} have the same number of hidden layers and neurons. In particular, when $M=1$, these two architectures are identical. This design will greatly reduce the impact of network structure differences on learning accuracy. By guaranteeing the similarity of approximation architectures, we can effectively evaluate the effects of different state representation methods.

For all representation methods, we adopt Adam \cite{Diederik2015Adam} to update NNs where the decay rate of first- and second-order moments are $0.9$ and $0.999$, respectively. The batchsize is 512 and the learning rate is $8\times10^{-5}$.

\subsection{Results Analysis}
We train 5 different runs of each representation method with different random seeds, and evaluate the policy learning accuracy by calculating the Root Mean Square Error (RMSE) based on $\mathcal{S}_{\rm test}$. The training curves of benchmark 1 are shown in Fig. \ref{fig04:result_task1}. In addition to the cases with fixed-size sets (case 1-4 in Table \ref{tab04:experiment_plan}), we also train an ESC policy based on the samples from the variable-size set (case 5). The learned ESC policy based on case 5 is evaluated when $M=5$, $M=10$, $M=15$ and $M=20$, respectively, shown as the blue solid lines in Fig. \ref{fig04:result_task1}. 

\begin{figure}[!htb]
\centering
\captionsetup{singlelinecheck = false,labelsep=period, font=small}
\captionsetup[subfigure]{justification=centering}
        \subfloat[]{\label{fig04:M=5}\includegraphics[width=0.24\textwidth]{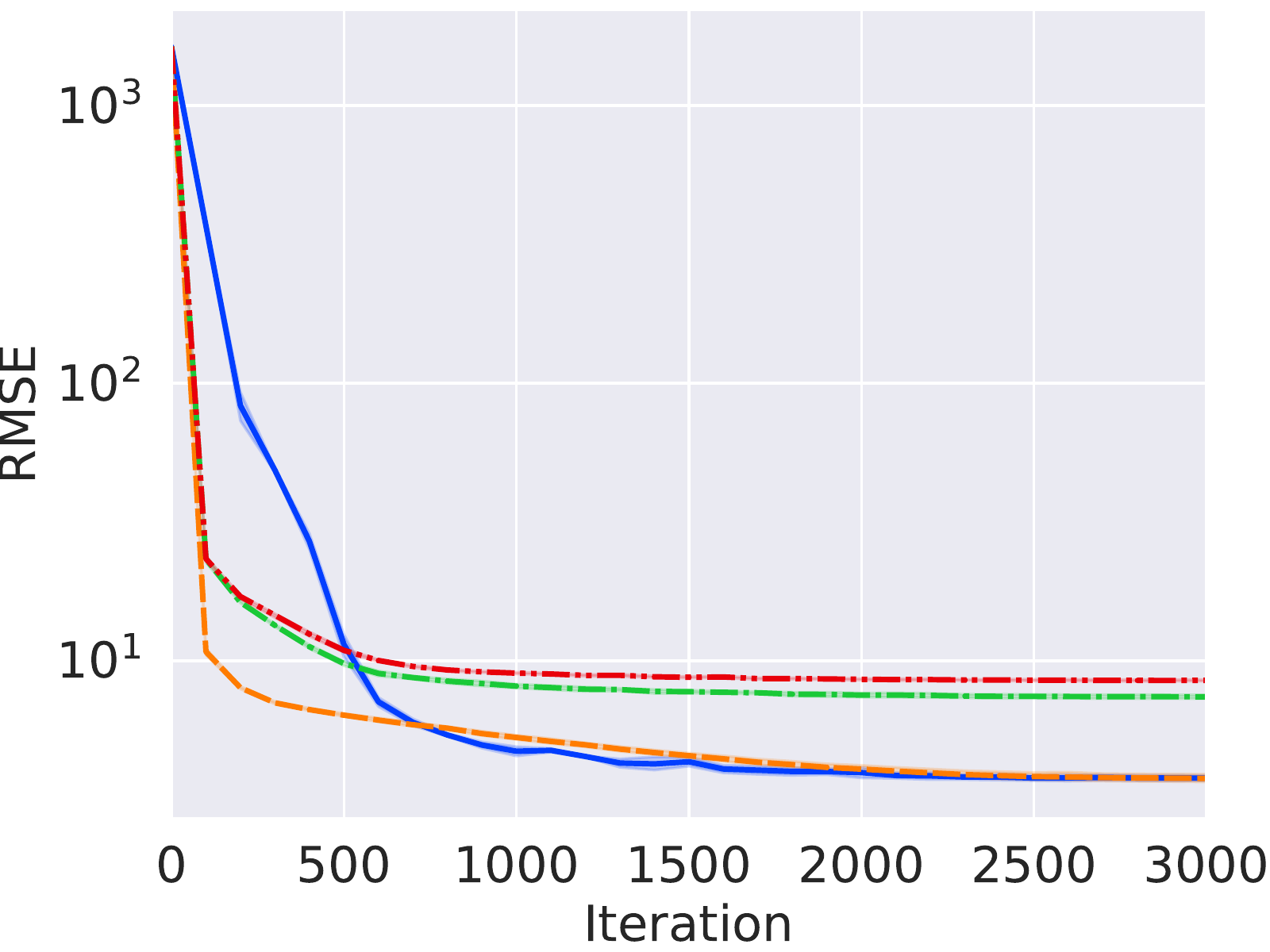}}
        \subfloat[$M=10$]{\label{fig04:M=10}\includegraphics[width=0.24\textwidth]{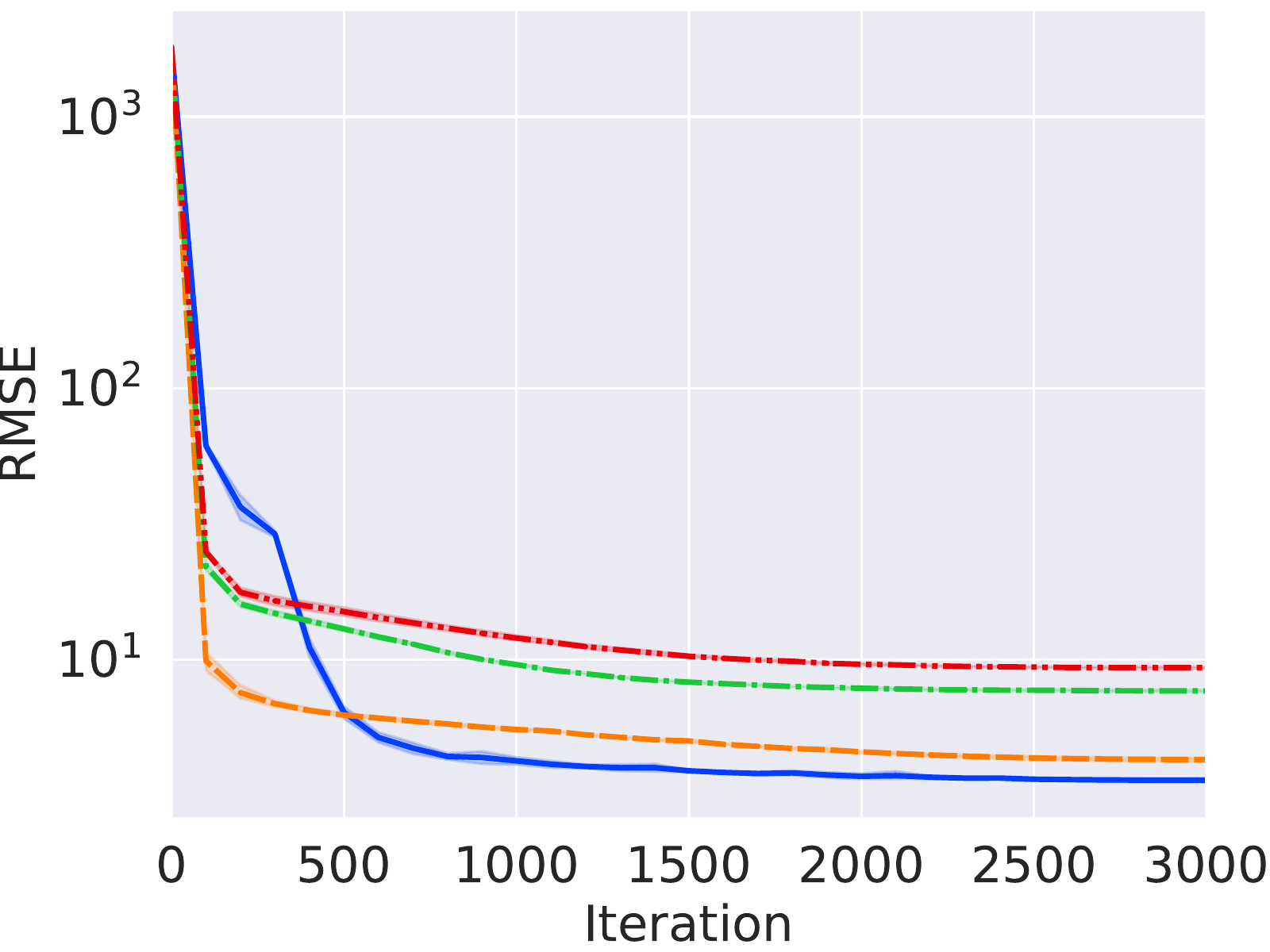}} \\
        \subfloat[$M=15$]{\label{fig04:M=15}\includegraphics[width=0.24\textwidth]{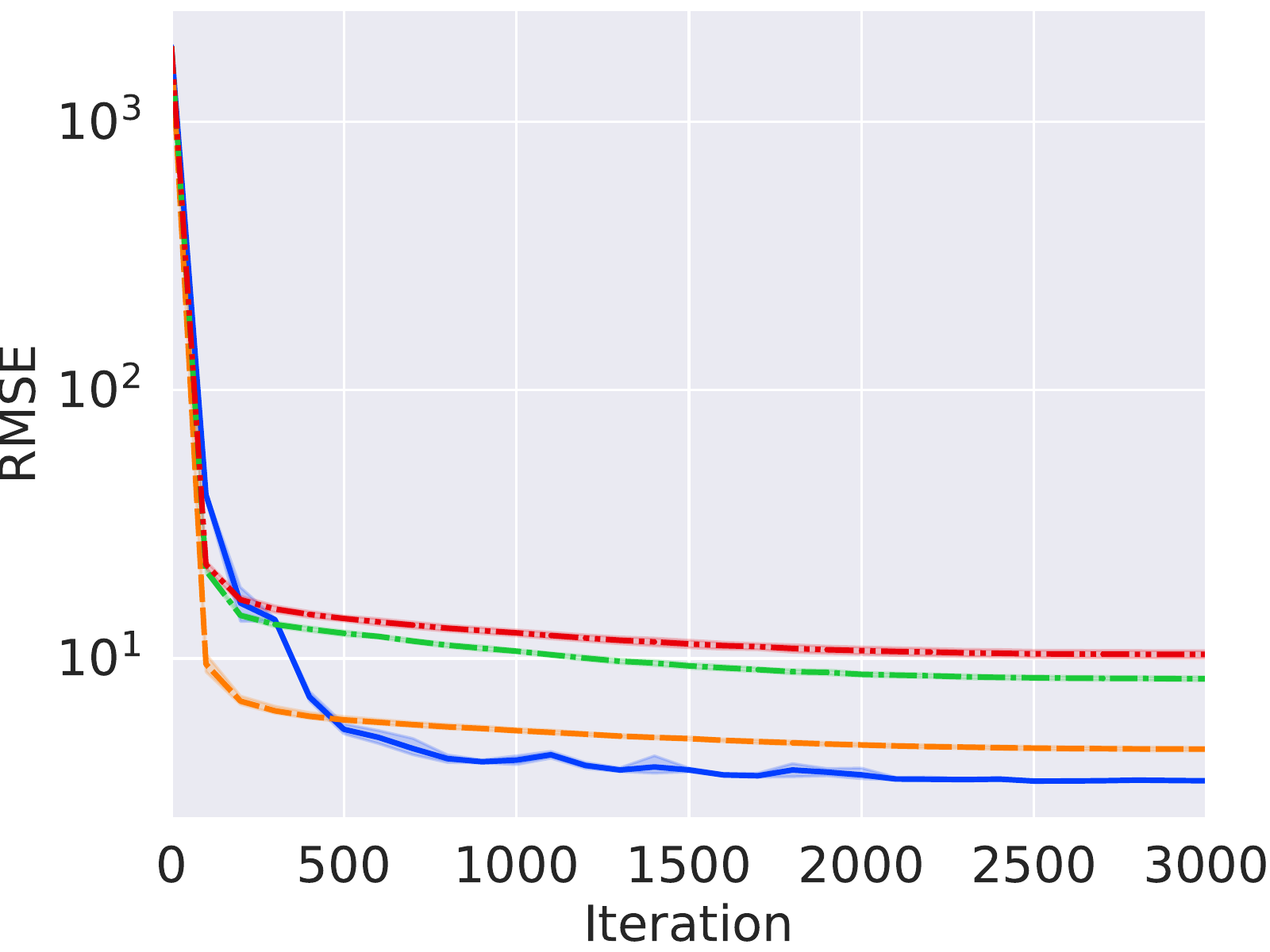}}
        \subfloat[$M=20$]{\label{fig04:M=20}\includegraphics[width=0.24\textwidth]{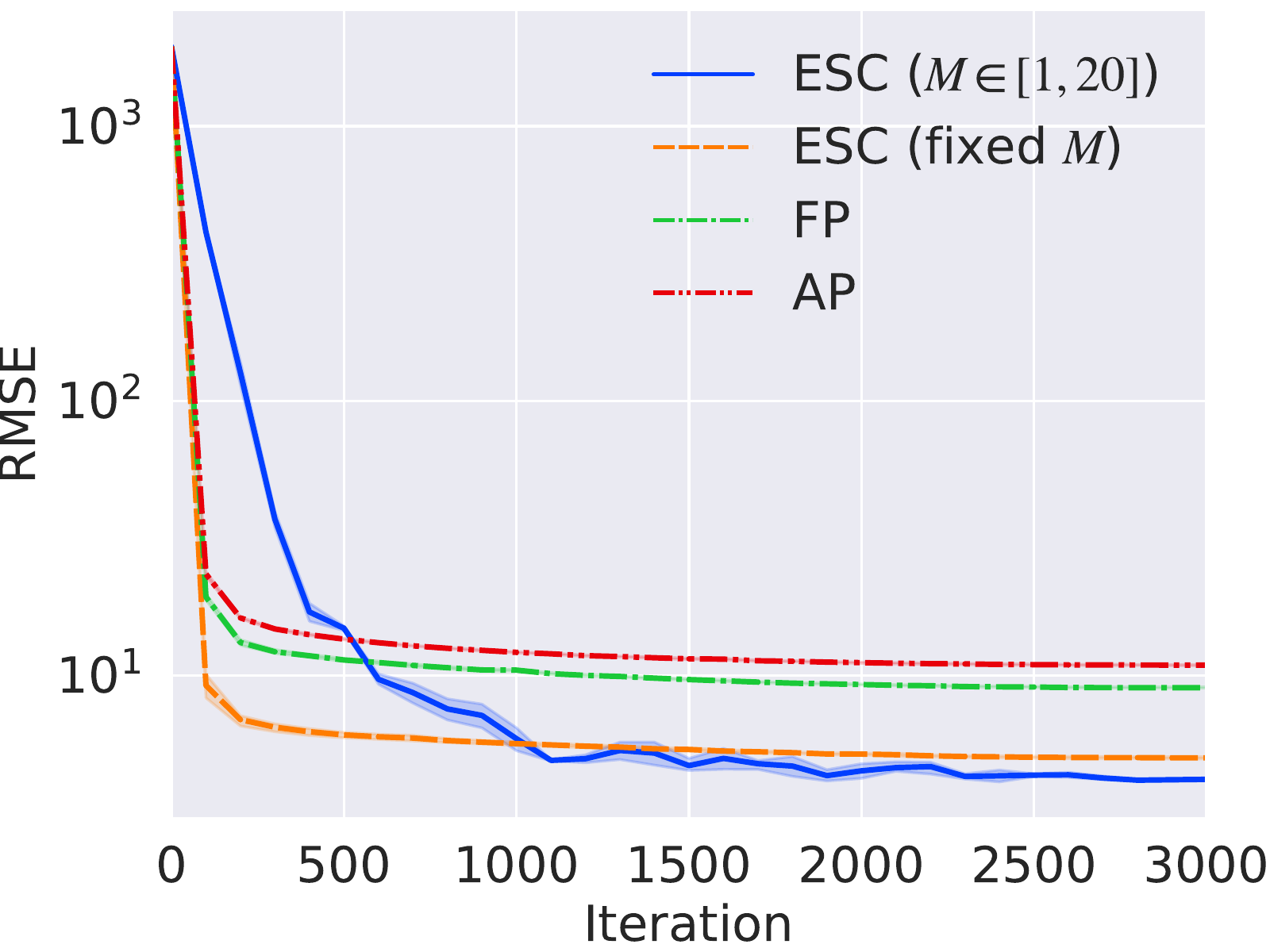}}
        \caption{Training curves of benchmark 1. The solid lines correspond to the mean RMSE and the shaded regions correspond to 95\% confidence interval over 5 runs. (a) $M=5$. (b) $M=10$. (c) $M=15$. (d) $M=20$. }
    \label{fig04:result_task1}
\end{figure}

\begin{table*}[!htb]
\centering
\caption{The average final RMSE on $\mathcal{S_{\rm test}}$ after training for $3000$ iterations. The minimum RMSE for each task is bolded. $\pm$ corresponds to a single standard deviation over 5 runs.}
\label{tab04:loss_comparison}
\begin{tabular}{cccccc}
\toprule[1.5pt]
Benchmark&Numbers of surrounding vehicles& ESC ($M\in[1,N]\cap\mathbb{N}$)& ESC (fixed $M$) &FP &AP \\
\midrule[1pt]
1& $M=5$ & 3.78$\pm$0.1 & \textbf{3.77$\pm$0.15} & 7.42$\pm$0.13 & 8.5$\pm$0.07 \\
& $M=10$ & \textbf{3.6$\pm$0.06} & 4.29$\pm$0.08 & 7.68$\pm$0.07 & 9.35$\pm$0.06 \\
& $M=15$ & \textbf{3.51$\pm$0.02} & 4.6$\pm$0.08 & 8.42$\pm$0.17 & 10.36$\pm$0.41 \\
& $M=20$ & \textbf{4.19$\pm$0.05} & 5.02$\pm$0.06 & 9.04$\pm$0.06 & 10.93$\pm$0.06 \\
\midrule[1.0pt]
2& $M=5$ & 36.87$\pm$0.29 & \textbf{30.69$\pm$0.36} & 53.63$\pm$0.01 & 55.83$\pm$0.02 \\
& $M=10$ & 31.83$\pm$0.57 & \textbf{27.76$\pm$0.15} & 56.42$\pm$0.05 & 60.14$\pm$0.36 \\
& $M=15$ & 30.15$\pm$0.67 & \textbf{29.97$\pm$1.05} & 54.18$\pm$0.41 & 56.25$\pm$0.12 \\
& $M=20$ & \textbf{32.6$\pm$0.94} & 33.9$\pm$1.08 & 51.58$\pm$0.19 & 53.56$\pm$0.46 \\
\midrule[1.0pt]
3& $M=5$ & 12.46$\pm$0.73 & \textbf{10.98$\pm$0.19} & 42.31$\pm$1.11 & 57.56$\pm$1.21 \\
& $M=10$ & \textbf{5.56$\pm$0.19} & 7.62$\pm$0.11 & 33.09$\pm$1.05 & 43.82$\pm$0.92 \\
& $M=15$ & \textbf{3.82$\pm$0.17} & 6.13$\pm$0.16 & 29.59$\pm$0.23 & 44.0$\pm$0.34 \\
& $M=20$ & 6.77$\pm$0.62 & \textbf{5.81$\pm$0.34} & 31.6$\pm$0.87 & 44.94$\pm$1.43 \\
\midrule[1.0pt]
4& $M=5$ & 5.96$\pm$0.17 & \textbf{4.42$\pm$0.36} & 10.82$\pm$0.14 & 12.28$\pm$0.07 \\
& $M=10$ & \textbf{4.33$\pm$0.14} & 4.7$\pm$0.15 & 9.4$\pm$0.21 & 10.19$\pm$0.05 \\
& $M=15$ & \textbf{3.8$\pm$0.08} & 4.46$\pm$0.26 & 8.3$\pm$0.37 & 9.19$\pm$0.1 \\
& $M=20$ & \textbf{3.89$\pm$0.11} & 4.72$\pm$0.07 & 8.07$\pm$0.2 & 8.53$\pm$0.08 \\
\midrule[1.0pt]
5& $M=5$ & 5.57$\pm$0.08 & \textbf{3.95$\pm$0.17} & 18.59$\pm$0.31 & 24.79$\pm$0.47 \\
& $M=10$ & 2.88$\pm$0.08 & \textbf{2.39$\pm$0.07} & 16.93$\pm$0.24 & 25.96$\pm$0.35 \\
& $M=15$ & 2.2$\pm$0.06 & \textbf{1.89$\pm$0.1} & 14.82$\pm$0.3 & 24.16$\pm$0.46 \\
& $M=20$ & 2.1$\pm$0.07 & \textbf{1.47$\pm$0.03} & 13.72$\pm$0.21 & 23.26$\pm$0.39 \\
\midrule[1.0pt]
6& $M=5$ & \textbf{40.88$\pm$1.33 }& 43.02$\pm$3.11 & 66.01$\pm$1.78 & 64.97$\pm$4.91 \\
& $M=10$ & \textbf{35.52$\pm$0.28} & 42.9$\pm$2.17 & 219.57$\pm$16.9 & 355.9$\pm$6.21 \\
& $M=15$ & \textbf{43.59$\pm$1.28} & 56.56$\pm$0.87 & 349.63$\pm$3.82 & 679.42$\pm$25.02 \\
& $M=20$ & \textbf{59.64$\pm$1.56} & 62.02$\pm$7.02 & 508.31$\pm$17.91 & 832.74$\pm$5.49 \\
\bottomrule[1.5pt]
\end{tabular}
\end{table*}

\begin{figure*}[!htb]
\centering
\captionsetup{singlelinecheck = false,labelsep=period, font=small}
\captionsetup[subfigure]{justification=centering}
        \subfloat[]{\label{fig04:task1}\includegraphics[width=0.30\textwidth]{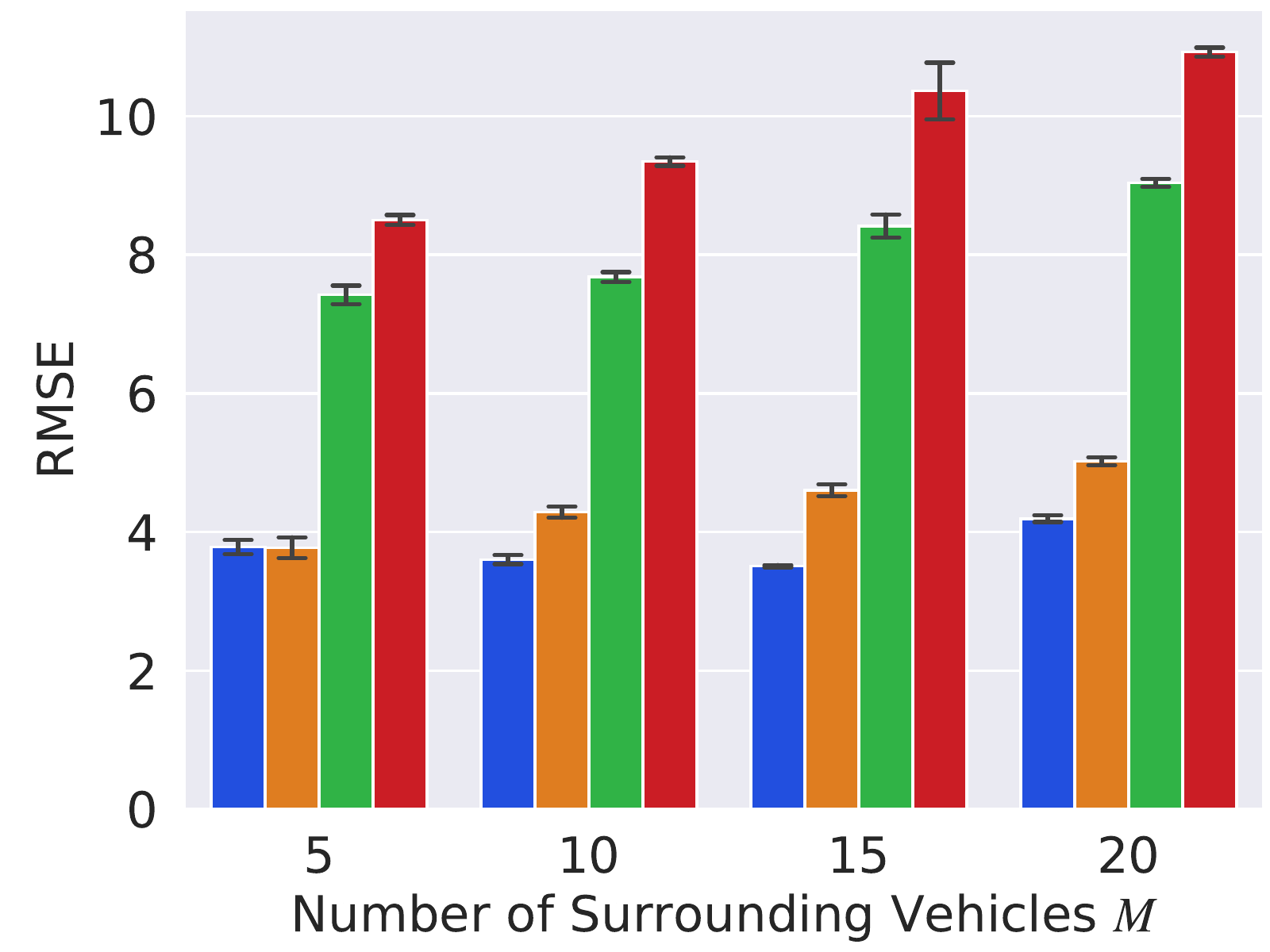}}
        \subfloat[]{\label{fig04:task2}\includegraphics[width=0.30\textwidth]{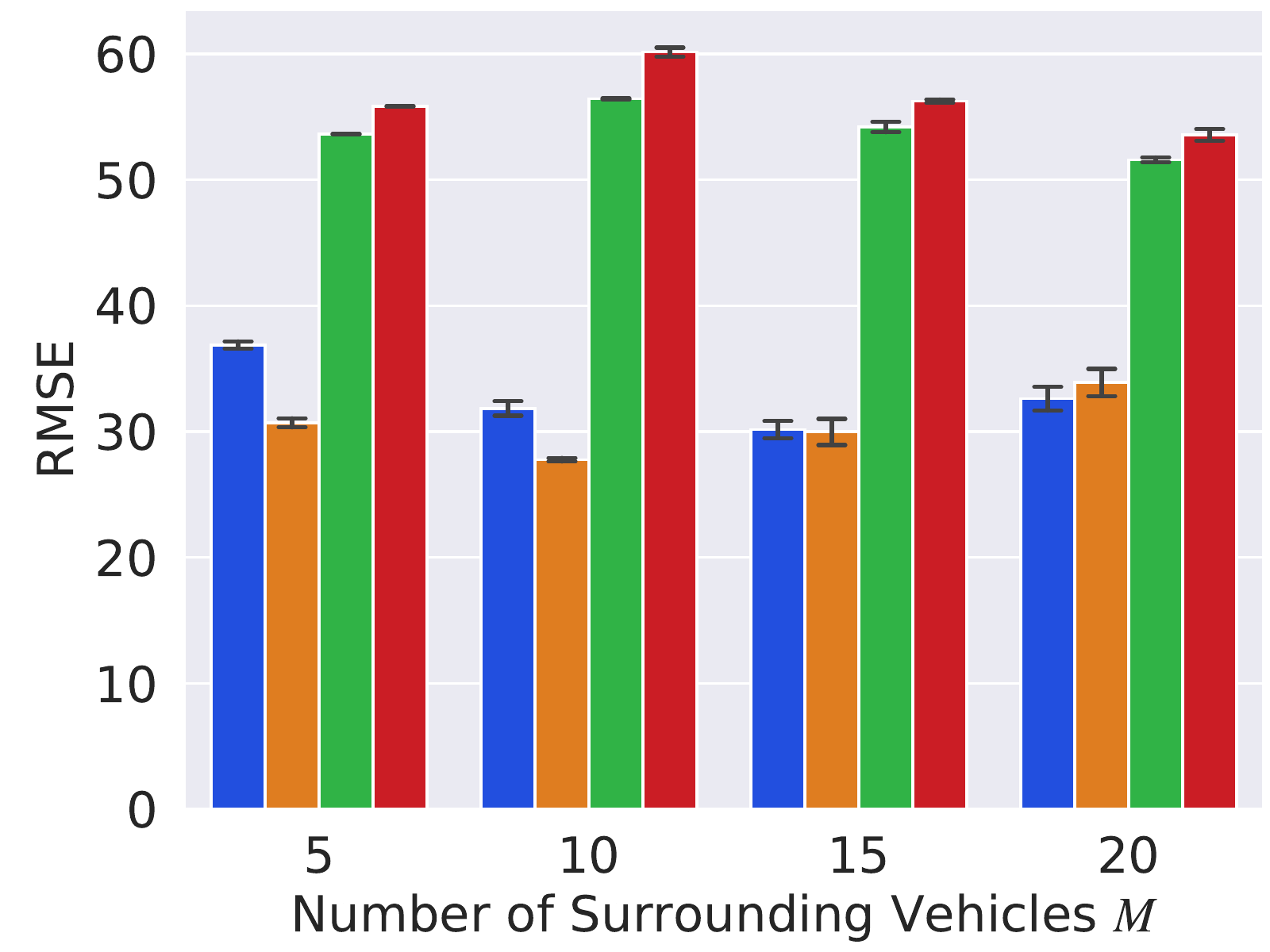}} 
        \subfloat[]{\label{fig04:task3}\includegraphics[width=0.30\textwidth]{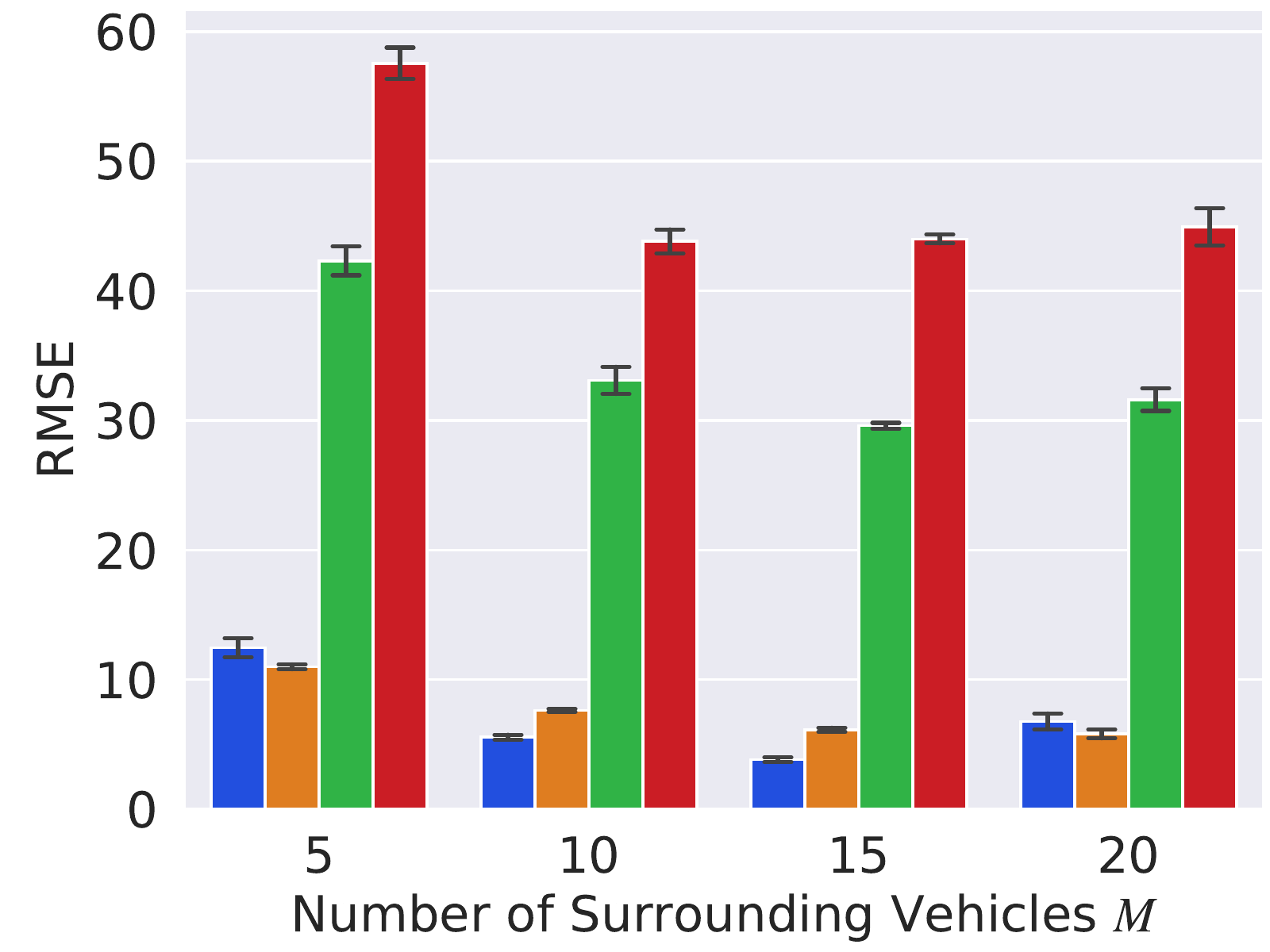}}\\
        \subfloat[]{\label{fig04:task4}\includegraphics[width=0.30\textwidth]{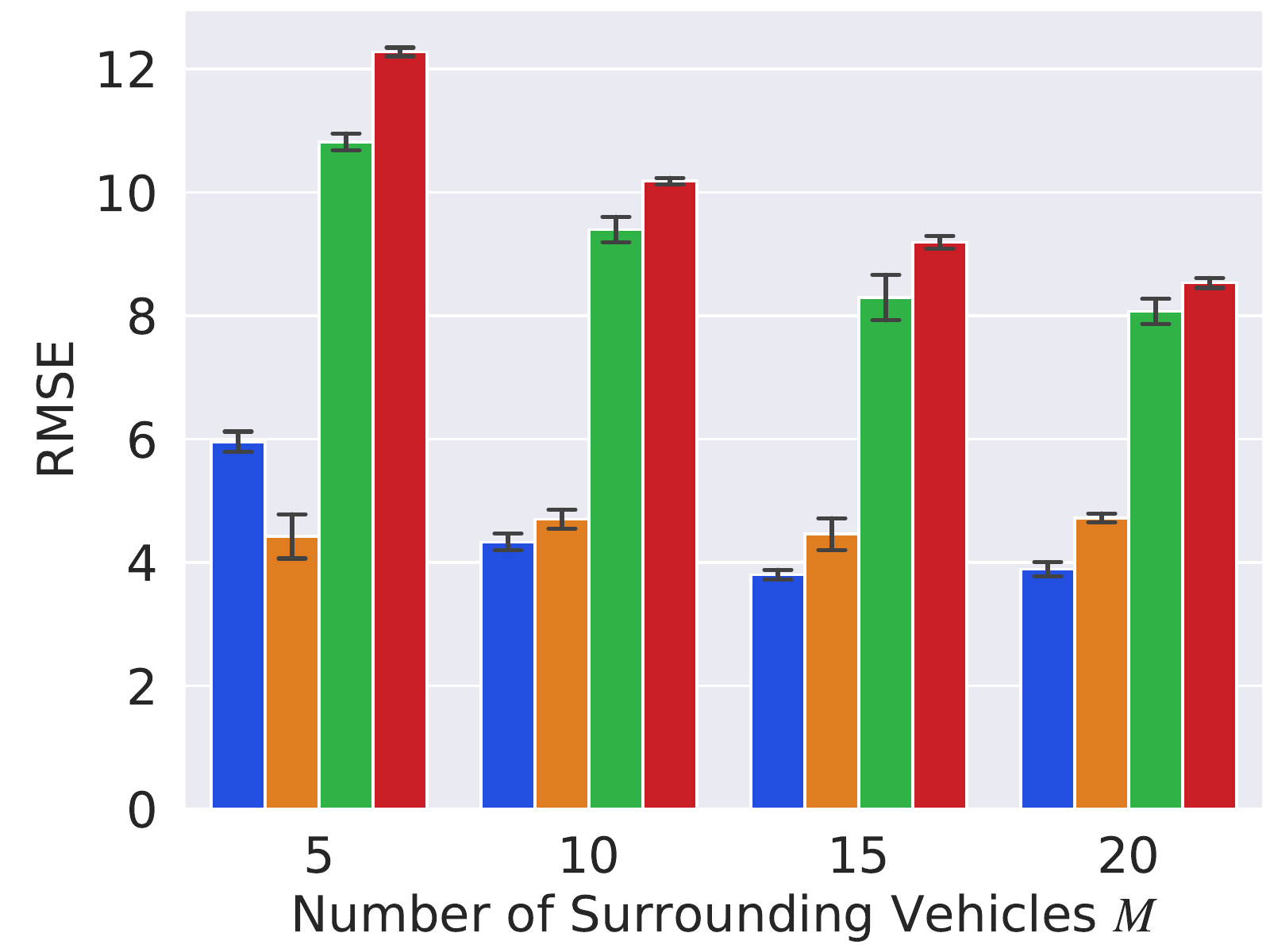}} 
        \subfloat[]{\label{fig04:task5}\includegraphics[width=0.30\textwidth]{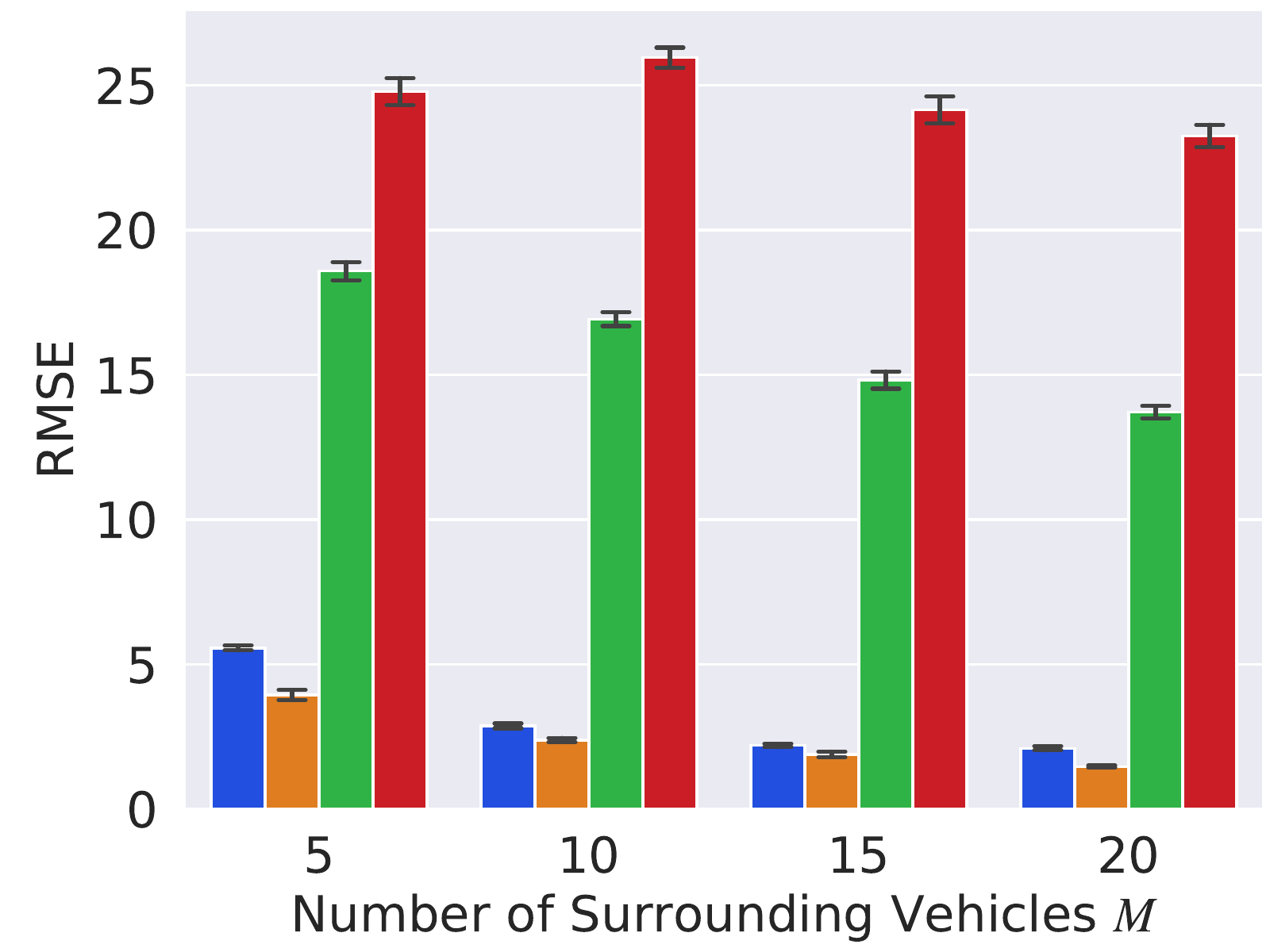}}
        \subfloat[]{\label{fig04:task6}\includegraphics[width=0.30\textwidth]{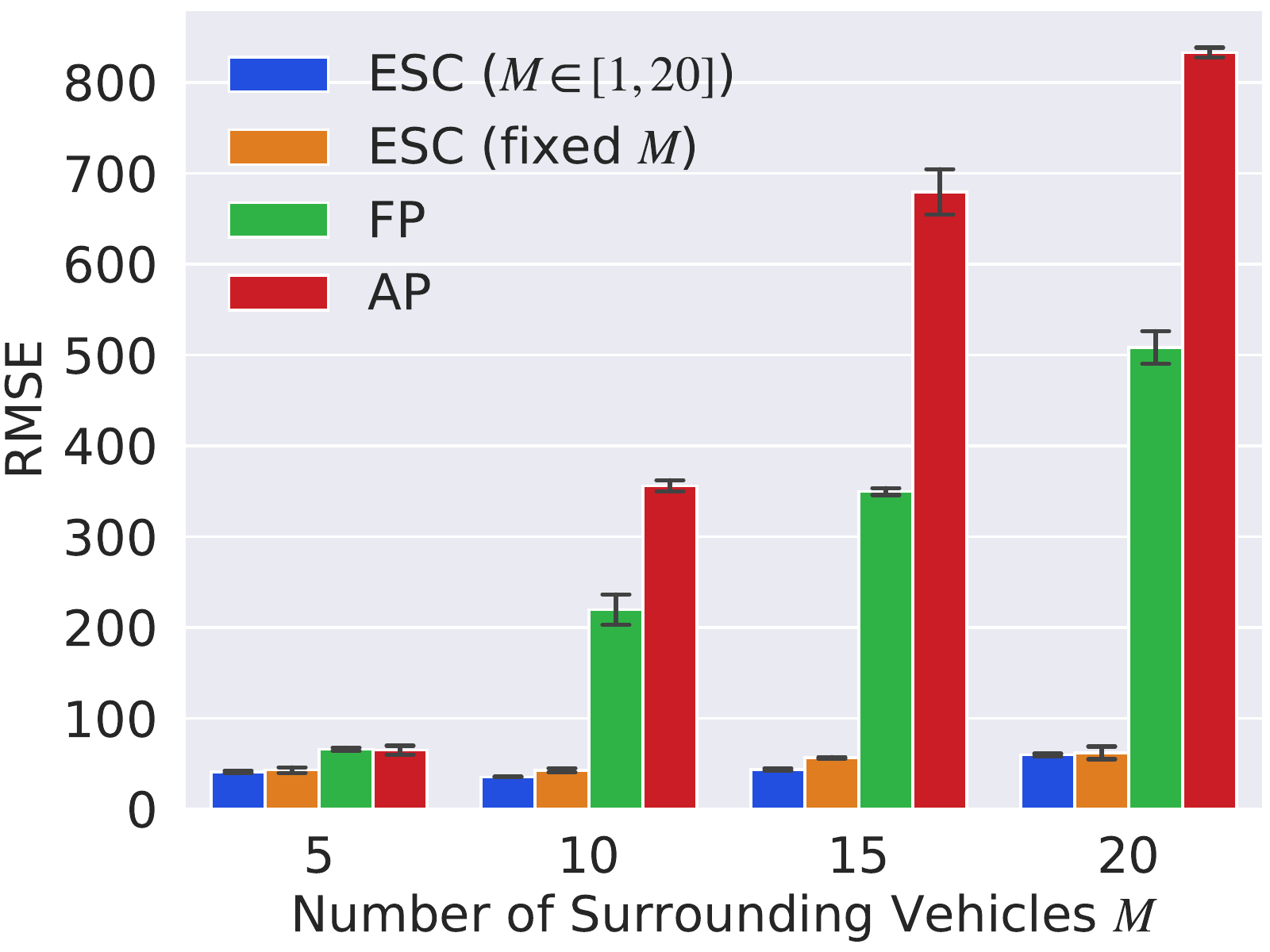}} \\
        \caption{Average final RMSE of different representation methods on all benchmarks. (a) Benchmark 1. (b) Benchmark 2. (c) Benchmark 3. (d) Benchmark 4. (e) Benchmark 5. (f) Benchmark 6.  }
    \label{fig04:loss_comparison}
\end{figure*}

Fig. \ref{fig04:loss_comparison} and Table \ref{tab04:loss_comparison} display the final RMSE under each experimental setting. Results show that the proposed ESC method outperforms or matches two baselines in all benchmarks and cases. Among all the cases, the RMSE of the FP method is 20.7\% lower than that of the AP method on average. This is because the predetermined order $o$ helps to reduce the sample complexity. Compared with the AP and FP methods, ESC with fixed $M$ achieves an average error reduction of 62.2\% and 67.5\%, respectively. On the one hand, it is obvious that when considering the same number of surrounding vehicles ($M=5$, $10$, $15$, or $20$), the performance of ESC (fixed $M$) is much better than AP and FP. This indicates that ESC is more suitable to represent the surrounding vehicles set $\SPACE{X}$ due to its permutation-invariance property and continuity. Compared with FP, ESC also eliminates the requirement of manually designed sorting rules. On the other hand, the learning accuracy of ESC with variable-size sets (ESC ($M\in [1,20]$)) is comparable to that with fixed set size (ESC (fixed $M$)). Therefore, it suggests that the ESC method is capable of representing variable-size sets, thereby eliminating the burden of training different approximation NNs for scenarios with different numbers of surrounding vehicles. To conclude, experimental results indicate that the proposed ESC method improves the representation ability of driving observation.

\subsection{Future Work}
In this paper, the policy NN and feature NN are updated to approximate the designed target policy in Table \ref{tab04:task} under the supervised learning framework. The proposed ESC approach has potential to be adopted in actual state representation applications of autonomous driving based on supervised learning or RL. To this end, we first need to normalize all indicators of surrounding vehicles to the same range. Then, for supervised learning based decision-making, we can learn both policy and feature NNs using the labeled data $\{\mathcal{O}_i, a_i^*\}$, where $a^*$ is the labeled action. For RL-based decision-making, we can iteratively find nearly optimal policy and feature NNs using the samples $\{\mathcal{O}_t, a_t, r_t, \mathcal{O}_{t+1}\}$ collected from the interaction between the ego vehicle and environments, where $r$ represents the reward function. In the future, we will focus on the applications of ESC in supervised learning based or RL-based autonomous driving.

\section{Conclusions}
\label{sec:conclusion}
In this paper, we first analyze the dimension sensitivity and permutation sensitivity issues faced by existing AP and FP representation methods. Due to dimension sensitivity, T2E methods based on  AP or FP representation are only  valid when the number of surrounding vehicles is fixed. Due to the permutation sensitivity, AP and FP methods suffer from high sample complexity and policy discontinuity respectively. Both dimension sensitivity and permutation sensitivity will damage the policy learning accuracy and limit the applicability of T2E decision-making in different driving scenarios. 

To overcome this problem, we propose the ESC state representation method to describe the environment observation for decision-making in autonomous driving. The proposed ESC method employs a feature NN to encode the real-valued feature of each surrounding vehicle into an encoding vector, and then adds these vectors to obtain the representation vector of the set of surrounding vehicles. By concatenating the set representation with other variables, such as indicators of the ego vehicle and road, we achieve a fixed-dimensional and permutation-invariance state representation. We have further proved that there exists an over-parameterized feature NN such that the ESC state representation is injective if the output dimension of the feature NN is greater than the number of variables of all surrounding vehicles. Besides, by taking the ESC representation as policy inputs, the nearly optimal feature NN and policy NN can be found by simultaneously optimizing them using gradient-based updating. Experiments demonstrate the proposed ESC method improves the  representation ability of driving observation, leading to a reduction of 62.2\% in policy learning error compared with the fixed-permutation representation method.

\section*{Acknowledgment}
The authors are grateful to the Editor-in-Chief, the Associate Editor, and anonymous reviewers for their valuable comments.


\ifCLASSOPTIONcaptionsoff
  \newpage
\fi
\bibliographystyle{ieeetr}
\bibliography{ref}

\begin{thebibliography}{10}

\bibitem{katrakazas2015rule-based}
C.~Katrakazas, M.~Quddus, W.-H. Chen, and L.~Deka, ``Real-time motion planning
  methods for autonomous on-road driving: State-of-the-art and future research
  directions,'' {\em Transportation Research Part C: Emerging Technologies},
  vol.~60, pp.~416--442, 2015.

\bibitem{montemerlo2008junior}
M.~Montemerlo, J.~Becker, S.~Bhat, H.~Dahlkamp, D.~Dolgov, S.~Ettinger,
  D.~Haehnel, T.~Hilden, G.~Hoffmann, B.~Huhnke, {\em et~al.}, ``Junior: The
  stanford entry in the urban challenge,'' {\em Journal of field Robotics},
  vol.~25, no.~9, pp.~569--597, 2008.

\bibitem{sutton2018reinforcement}
R.~S. Sutton and A.~G. Barto, {\em Reinforcement learning: An introduction}.
\newblock Boston, MA, USA: MIT press, 2018.

\bibitem{lecun2015deep}
Y.~LeCun, Y.~Bengio, and G.~Hinton, ``Deep learning,'' {\em Nature}, vol.~521,
  no.~7553, pp.~436--444, 2015.

\bibitem{duan2021distributional}
J.~Duan, Y.~Guan, S.~E. Li, Y.~Ren, Q.~Sun, and B.~Cheng, ``Distributional soft
  actor-critic: Off-policy reinforcement learning for addressing value
  estimation errors,'' {\em IEEE Transactions on Neural Networks and Learning
  Systems}, 2021, doi: 10.1109/TNNLS.2021.3082568.

\bibitem{duan2021cadp}
J.~Duan, Z.~Liu, S.~E. Li, Q.~Sun, and Z.~Jia, ``Adaptive dynamic programming
  for nonaffine nonlinear optimal control problem with state constraints,''
  {\em Neurocomputing}, 2021, doi:
  https://doi.org/10.1016/j.neucom.2021.04.134.

\bibitem{pomerleau1989alvinn}
D.~A. Pomerleau, ``Alvinn: An autonomous land vehicle in a neural network,'' in
  {\em Advances in Neural Information Processing Systems (NeurIPS)}, (Denver,
  CO, USA), pp.~305--313, MIT Press, 1989.

\bibitem{bojarski2016NVIDIA_e2e_1}
M.~Bojarski, D.~Del~Testa, D.~Dworakowski, B.~Firner, B.~Flepp, P.~Goyal, L.~D.
  Jackel, M.~Monfort, U.~Muller, J.~Zhang, {\em et~al.}, ``End to end learning
  for self-driving cars,'' {\em arXiv preprint arXiv:1604.07316}, 2016.

\bibitem{bojarski2017NVIDIA_e2e_2}
M.~Bojarski, P.~Yeres, A.~Choromanska, K.~Choromanski, B.~Firner, L.~Jackel,
  and U.~Muller, ``Explaining how a deep neural network trained with end-to-end
  learning steers a car,'' {\em arXiv preprint arXiv:1704.07911}, 2017.

\bibitem{lillicrap2015DDPG}
T.~P. Lillicrap, J.~J. Hunt, A.~Pritzel, N.~Heess, T.~Erez, Y.~Tassa,
  D.~Silver, and D.~Wierstra, ``Continuous control with deep reinforcement
  learning,'' in {\em the 4th International Conference on Learning
  Representations (ICLR)}, (San Juan, Puerto Rico), 2016.

\bibitem{jaritz2018end}
M.~Jaritz, R.~De~Charette, M.~Toromanoff, E.~Perot, and F.~Nashashibi,
  ``End-to-end race driving with deep reinforcement learning,'' in {\em
  International Conference on Robotics and Automation (ICRA)}, (Brisbane,
  Australia), pp.~2070--2075, IEEE, 2018.

\bibitem{lecun2004dave}
Y.~Lecun, E.~Cosatto, J.~Ben, U.~Muller, and B.~Flepp, ``Dave: Autonomous
  off-road vehicle control using end-to-end learning,'' {\em DARPA-IPTO Final
  Report}, 2004.

\bibitem{chen2015deepdriving}
C.~Chen, A.~Seff, A.~Kornhauser, and J.~Xiao, ``Deepdriving: Learning
  affordance for direct perception in autonomous driving,'' in {\em Proceedings
  of the IEEE International Conference on Computer Vision (ICCV)}, (Santiago,
  Chile), pp.~2722--2730, IEEE, 2015.

\bibitem{wymann2000torcs}
B.~Wymann, E.~Espi{\'e}, C.~Guionneau, C.~Dimitrakakis, R.~Coulom, and
  A.~Sumner, ``Torcs, the open racing car simulator,'' 2015.

\bibitem{kendall2019DDPGdriving}
A.~Kendall, J.~Hawke, D.~Janz, P.~Mazur, D.~Reda, J.-M. Allen, V.-D. Lam,
  A.~Bewley, and A.~Shah, ``Learning to drive in a day,'' in {\em International
  Conference on Robotics and Automation (ICRA)}, (Montreal, Canada),
  pp.~8248--8254, IEEE, 2019.

\bibitem{perot2017end}
E.~Perot, M.~Jaritz, M.~Toromanoff, and R.~De~Charette, ``End-to-end driving in
  a realistic racing game with deep reinforcement learning,'' in {\em
  Proceedings of the IEEE Conference on Computer Vision and Pattern Recognition
  Workshops}, (Columbus, OH, USA), pp.~3--4, IEEE, 2017.

\bibitem{wolf2017learning}
P.~Wolf, C.~Hubschneider, M.~Weber, A.~Bauer, J.~H{\"a}rtl, F.~D{\"u}rr, and
  J.~M. Z{\"o}llner, ``Learning how to drive in a real world simulation with
  deep q-networks,'' in {\em Intelligent Vehicles Symposium (IV)}, (Los
  Angeles, CA, USA), pp.~244--250, IEEE, 2017.

\bibitem{liang2018cirl}
X.~Liang, T.~Wang, L.~Yang, and E.~Xing, ``Cirl: Controllable imitative
  reinforcement learning for vision-based self-driving,'' in {\em Proceedings
  of the European Conference on Computer Vision (ECCV)}, (Munich, Germany),
  pp.~584--599, 2018.

\bibitem{isele2018navigating}
D.~Isele, R.~Rahimi, A.~Cosgun, K.~Subramanian, and K.~Fujimura, ``Navigating
  occluded intersections with autonomous vehicles using deep reinforcement
  learning,'' in {\em International Conference on Robotics and Automation
  (ICRA)}, (Brisbane, Australia), pp.~2034--2039, IEEE, 2018.

\bibitem{li2020behavior}
D.~Li, Y.~Wu, B.~Bai, and Q.~Hao, ``Behavior and interaction-aware motion
  planning for autonomous driving vehicles based on hierarchical intention and
  motion prediction,'' in {\em 2020 IEEE 23rd International Conference on
  Intelligent Transportation Systems (ITSC)}, pp.~1--8, IEEE, 2020.

\bibitem{mirchevska2018high}
B.~Mirchevska, C.~Pek, M.~Werling, M.~Althoff, and J.~Boedecker, ``High-level
  decision making for safe and reasonable autonomous lane changing using
  reinforcement learning,'' in {\em 21st International Conference on
  Intelligent Transportation Systems (ITSC)}, (Maui, Hawaii), pp.~2156--2162,
  IEEE, 2018.

\bibitem{wang2017formulation}
P.~Wang and C.-Y. Chan, ``Formulation of deep reinforcement learning
  architecture toward autonomous driving for on-ramp merge,'' in {\em 20th
  International Conference on Intelligent Transportation Systems (ITSC)},
  (Yokohama, Japan), pp.~1--6, IEEE, 2017.

\bibitem{wang2018reinforcement}
P.~Wang, C.-Y. Chan, and A.~de~La~Fortelle, ``A reinforcement learning based
  approach for automated lane change maneuvers,'' in {\em Intelligent Vehicles
  Symposium (IV)}, (Changshu, Suzhou), pp.~1379--1384, IEEE, 2018.

\bibitem{wang2019continuous}
P.~Wang, H.~Li, and C.-Y. Chan, ``Continuous control for automated lane change
  behavior based on deep deterministic policy gradient algorithm,'' in {\em
  Intelligent Vehicles Symposium (IV)}, (Paris, France), pp.~1454--1460, IEEE,
  2019.

\bibitem{ma2020alternating}
J.~Ma, Z.~Cheng, X.~Zhang, M.~Tomizuka, and T.~H. Lee, ``Alternating direction
  method of multipliers for constrained iterative lqr in autonomous driving,''
  {\em arXiv preprint arXiv:2011.00462}, 2020.

\bibitem{duan2020hierarchical}
J.~Duan, S.~E. Li, Y.~Guan, Q.~Sun, and B.~Cheng, ``Hierarchical reinforcement
  learning for self-driving decision-making without reliance on labelled
  driving data,'' {\em IET Intelligent Transport Systems}, vol.~14, no.~5,
  pp.~297--305, 2020.

\bibitem{guan2020centralized}
Y.~Guan, Y.~Ren, S.~E. Li, Q.~Sun, L.~Luo, and K.~Li, ``Centralized cooperation
  for connected and automated vehicles at intersections by proximal policy
  optimization,'' {\em IEEE Transactions on Vehicular Technology}, vol.~69,
  no.~11, pp.~12597--12608, 2020.

\bibitem{zaheer2017deepset}
M.~Zaheer, S.~Kottur, S.~Ravanbakhsh, B.~Poczos, R.~R. Salakhutdinov, and A.~J.
  Smola, ``Deep sets,'' in {\em 31st Advances in Neural Information Processing
  Systems (NeurIPS)}, (Long Beach, CA, USA), pp.~3391--3401, MIT Press, 2017.

\bibitem{maron2020Deepset}
H.~Maron, O.~Litany, G.~Chechik, and E.~Fetaya, ``On learning sets of symmetric
  elements,'' {\em arXiv preprint arXiv:2002.08599}, 2020.

\bibitem{sannai2019universalDeepset}
A.~Sannai, Y.~Takai, and M.~Cordonnier, ``Universal approximations of
  permutation invariant/equivariant functions by deep neural networks,'' {\em
  arXiv preprint arXiv:1903.01939}, 2019.

\bibitem{Hornik1990Universal}
K.~Hornik, M.~Stinchcombe, and H.~White, ``Universal approximation of an
  unknown mapping and its derivatives using multilayer feedforward networks,''
  {\em Neural Networks}, vol.~3, no.~5, pp.~551--560, 1990.

\bibitem{allen2018convergence}
Z.~Allen-Zhu, Y.~Li, and Z.~Song, ``A convergence theory for deep learning via
  over-parameterization,'' in {\em Proceedings of the 36th International
  Conference on Machine Learning (ICML)}, vol.~97, (Long Beach, CA, USA),
  pp.~242--252, PMLR, Jun 2019.

\bibitem{du2019overconverge}
S.~Du, J.~Lee, H.~Li, L.~Wang, and X.~Zhai, ``Gradient descent finds global
  minima of deep neural networks,'' in {\em Proceedings of the 36th
  International Conference on Machine Learning (ICML)}, (Long Beach, CA, USA),
  pp.~1675--1685, PMLR, 2019.

\bibitem{hendrycks2016gelu}
D.~Hendrycks and K.~Gimpel, ``Gaussian error linear units (gelus),'' {\em arXiv
  preprint arXiv:1606.08415}, 2016.

\bibitem{Diederik2015Adam}
D.~P. Kingma and J.~Ba, ``Adam: {A} method for stochastic optimization,'' in
  {\em 3rd International Conference on Learning Representations (ICLR)}, (San
  Diego, CA, USA), ICLR, 2015.

\end{thebibliography}



%


\begin{IEEEbiography}[{\includegraphics[width=1in,height=1.25in,clip,keepaspectratio]{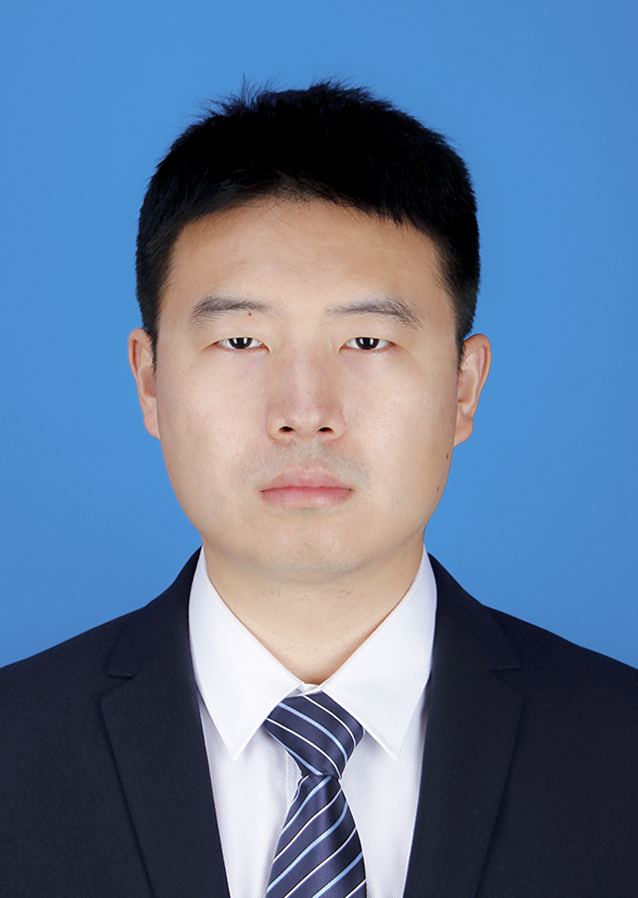}}]{Jingliang Duan}
received the B.S. degree from the College of Automotive Engineering, Jilin University, Changchun, China, in 2015. He studied as a visiting student researcher in Department of Mechanical Engineering, University of California, Berkeley, USA, in 2019. He received his Ph.D. degree in the School of Vehicle and Mobility, Tsinghua University, Beijing, China, in 2021. He is currently a research fellow in the Department of Electrical and Computer Engineering, National University of Singapore, Singapore. His research interests include decision and control of autonomous vehicle, reinforcement learning and adaptive dynamic programming, and driver behaviour analysis.
\end{IEEEbiography}

\vskip -2\baselineskip plus -1fil

\begin{IEEEbiography}[{\includegraphics[width=1in,height=1.25in,clip,keepaspectratio]{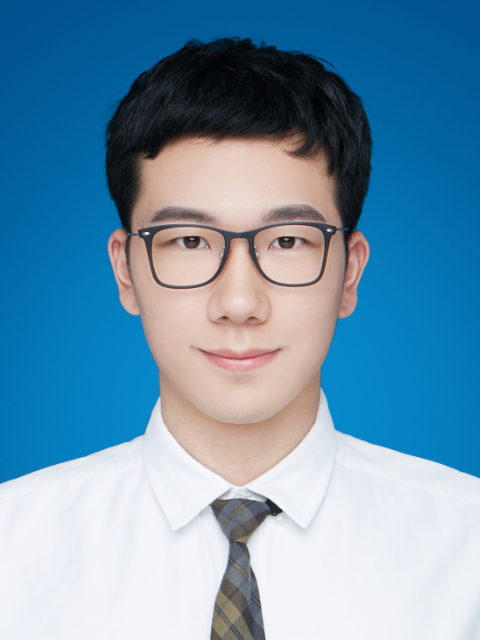}}]{Dongjie Yu}
received the B.S. degree from the School of Vehicle and Mobility, Tsinghua University, Beijing, China, in 2020. He is currently pursuing the Ph.D. degree with the School of Vehicle and Mobility, Tsinghua University, Beijing. 
His research interests include decision-making of autonomous vehicles and reinforcement learning.
\end{IEEEbiography}

\vskip -2\baselineskip plus -1fil

\begin{IEEEbiography}[{\includegraphics[width=1in,height=1.25in,clip,keepaspectratio]{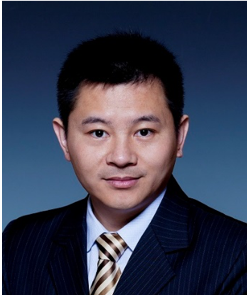}}]{Shengbo Eben Li}
(SM'16) received the M.S. and Ph.D. degrees from Tsinghua University in 2006 and 2009. He worked at Stanford University, University of Michigan, and University of California, Berkeley. He is currently a tenured professor at Tsinghua University. His active research interests include intelligent vehicles and driver assistance, reinforcement learning and distributed control, optimal control and estimation, etc.

He is the author of over 100 journal/conference papers, and the co-inventor of over 20 Chinese patents. He was the recipient of Best Paper Award in 2014 IEEE ITS Symposium, Best Paper Award in 14th ITS Asia Pacific Forum, National Award for Technological Invention in China (2013), Excellent Young Scholar of NSF China (2016), Young Professorship of Changjiang Scholar Program (2016). He is now the IEEE senior member and serves as associated editor of IEEE ITSM and IEEE Trans. ITS, etc.
\end{IEEEbiography}

\vskip -2\baselineskip plus -1fil

\begin{IEEEbiography}[{\includegraphics[width=1in,height=1.25in,clip,keepaspectratio]{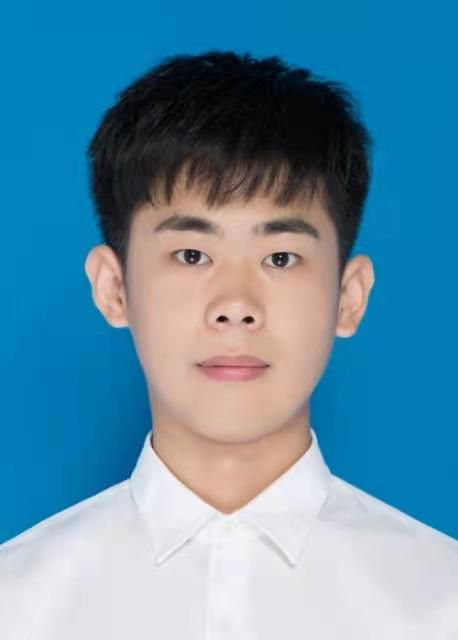}}]{Wenxuan Wang}
received his B.S. degree in vehicle engineering from Beijing  Institute Of Technology, Beijing, China, in 2018. He is currently a member in the State Key Laboratory of Automotive Safety and Energy, School of Vehicle and Mobility, Tsinghua University. His current research interests include decision-making and control of automated vehicles, and reinforcement learning algorithms.
\end{IEEEbiography}

\vskip -2\baselineskip plus -1fil

\begin{IEEEbiography}[{\includegraphics[width=1in,height=1.25in,clip,keepaspectratio]{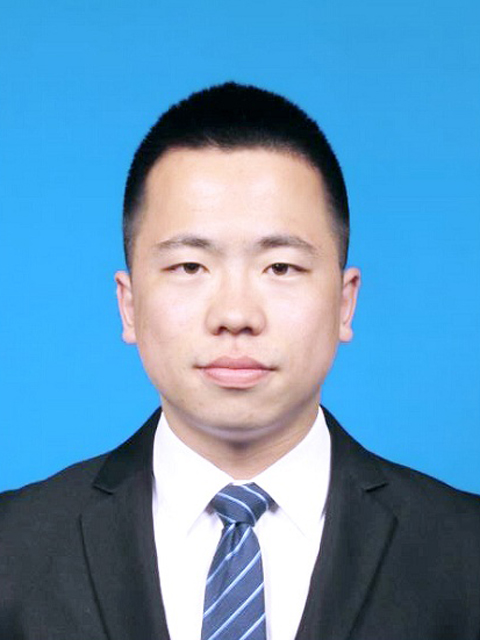}}]{Yangang Ren}
received the B.S. degree from the Department of Automotive Engineering, Tsinghua University, Beijing, China, in 2018. He is currently pursuing his Ph.D. degree in the School of Vehicle and Mobility, Tsinghua University, Beijing, China. His research interests include decision and control of autonomous driving, reinforcement learning, and adversarial learning.
\end{IEEEbiography}

\vskip -2\baselineskip plus -1fil

\begin{IEEEbiography}[{\includegraphics[width=1in,height=1.25in,clip,keepaspectratio]{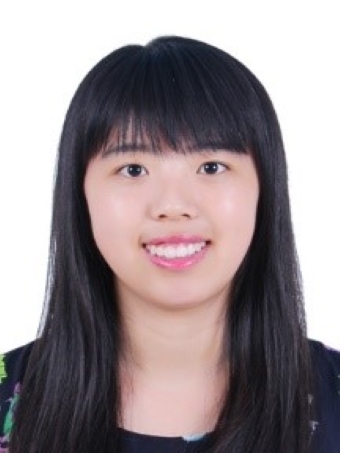}}]{Ziyu Lin}
received the B.S. degree in automotive engineering from China Agricultural University, Beijing, China, in 2017. She is currently pursuing the Ph.D. degree with the School of Vehicle and Mobility, Tsinghua University, Beijing. Her current reseach interests include model-based reinforcement learning, approximate dynamic programming, and model predictive control and distributed control.  She was a recipient of the Best Paper Award at the IEEE 2020 3rd International Conference on Unmanned Systems (ICUS), and Best Presentation Award on IEEE 2021 International Conference on  Computer Control and Robotics.
\end{IEEEbiography}

\vskip -2\baselineskip plus -1fil
\begin{IEEEbiography}[{\includegraphics[width=1in,height=1.25in,clip,keepaspectratio]{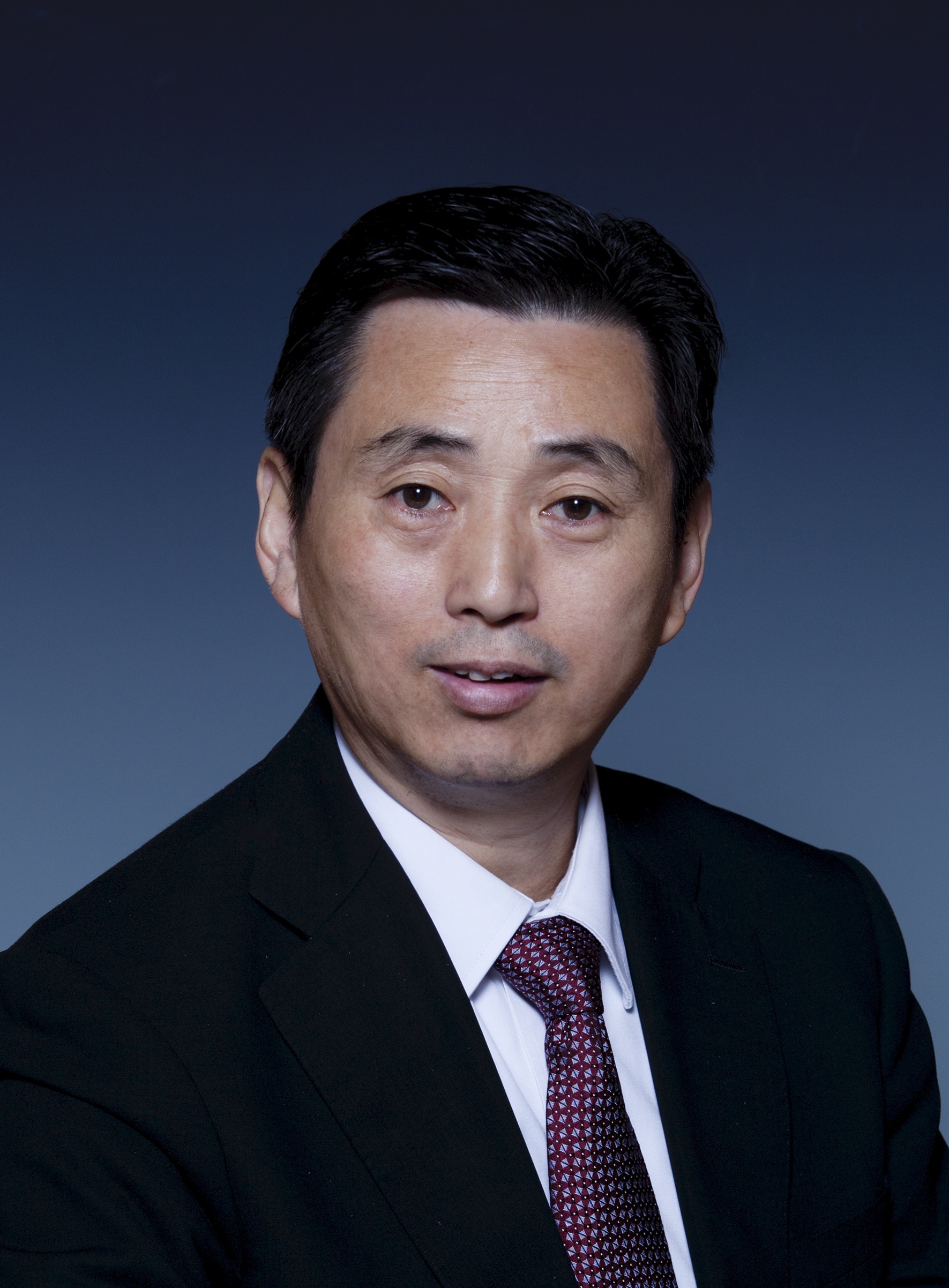}}]{Bo Cheng}
received the B.S. and M.S. degrees in automotive engineering from Tsinghua University, Beijing, China, in 1985 and 1988, respectively, and the Ph.D. degree in mechanical engineering from the University of Tokyo, Tokyo, Japan, in 1998. He is currently a Professor with School of Vehicle and Mobility, Tsinghua University, and the Dean of Tsinghua University–Suzhou Automotive Research Institute. He is the author of more than 100 peer-reviewed journal/conference papers and the co-inventor of 40 patents. His active research interests include autonomous vehicles, driver-assistance systems, active safety, and vehicular ergonomics, among others. 
\end{IEEEbiography}



%






\end{document}